\def\eqref#1{equation~\ref{#1}}
\def\1{\bm{1}}
\def\ve{{\bm{e}}}
\def\vh{{\bm{h}}}
\def\vm{{\bm{m}}}
\def\vx{{\bm{x}}}
\DeclareMathAlphabet{\mathsfit}{\encodingdefault}{\sfdefault}{m}{sl}
\SetMathAlphabet{\mathsfit}{bold}{\encodingdefault}{\sfdefault}{bx}{n}
\newtheorem{theorem}{Theorem}
\newtheorem{definition}{Definition}
\newtheorem{lemma}{Lemma}
\title{Nested Graph Neural Networks}
\author{%
  Muhan Zhang$^{1,2,}$\thanks{Corresponding author: Muhan Zhang (\texttt{muhan@pku.edu.cn}).}~~~~Pan Li$^{3,}$\thanks{Pan Li contributes Sec. 3.3 that proves the Theorem~\ref{thm:power} and some implementation ideas.}\\
  ${}^1$Institute for Artificial Intelligence, Peking University\\ 
  ${}^2$Beijing Institute for General Artificial Intelligence\\ 
  ${}^3$Department of Computer Science, Purdue University\\ 
}
\begin{document}

\maketitle

\begin{abstract}
Graph neural network (GNN)'s success in graph classification is closely related to the Weisfeiler-Lehman (1-WL) algorithm. By iteratively aggregating neighboring node features to a center node, both 1-WL and GNN obtain a node representation that encodes a \textit{rooted subtree} around the center node. These rooted subtree representations are then pooled into a single representation to represent the whole graph. However, rooted subtrees are of limited expressiveness to represent a non-tree graph. To address it, we propose Nested Graph Neural Networks (NGNNs). NGNN represents a graph with \textit{rooted subgraphs} instead of rooted subtrees, so that two graphs sharing many identical subgraphs (rather than subtrees) tend to have similar representations. The key is to make each node representation encode a subgraph around it more than a subtree. To achieve this, NGNN extracts a local subgraph around each node and applies a \textit{base GNN} to each subgraph to learn a subgraph representation. The whole-graph representation is then obtained by pooling these subgraph representations. We provide a rigorous theoretical analysis showing that NGNN is strictly more powerful than 1-WL. In particular, we proved that NGNN can discriminate almost all $r$-regular graphs, where 1-WL always fails. Moreover, unlike other more powerful GNNs, NGNN only introduces a constant-factor higher time complexity than standard GNNs. NGNN is a plug-and-play framework that can be combined with various base GNNs. We test NGNN with different base GNNs on several benchmark datasets. NGNN uniformly improves their performance and shows highly competitive performance on all datasets.
\end{abstract}

\section{Introduction}
Graph is an important tool to model relational data in the real world. Representation learning over graphs has become a popular topic of machine learning in recent years. While network embedding methods, such as DeepWalk~\citep{perozzi2014deepwalk}, can learn node representations well, they fail to generalize to whole-graph representations, which are crucial for applications such as graph classification, molecule modeling, and drug discovery. On the contrary, although traditional graph kernels~\citep{haussler1999convolution,shervashidze2009efficient,kondor2009graphlet,borgwardt2005shortest,neumann2016propagation,shervashidze2011weisfeiler} can be used for graph classification, they define graph similarity often in a heuristic way, which is not parameterized and lacks some flexibility to deal with features. 

In this context, graph neural networks (GNNs) have regained people's attention and become the state-of-the-art graph representation learning tool~\citep{scarselli2009graph,bruna2013spectral,duvenaud2015convolutional,li2015gated,kipf2016semi,defferrard2016convolutional,dai2016discriminative,velivckovic2017graph,zhang2018end,ying2018hierarchical}. GNNs use message passing to propagate features between connected nodes. By iteratively aggregating neighboring node features to the center node, GNNs learn node representations encoding their local structure and feature information. These node representations can be further pooled into a graph representation, enabling graph-level tasks such as graph classification. In this paper, we will use \textit{message passing GNNs} to denote this class of GNNs based on repeated neighbor aggregation~\citep{gilmer2017neural}, in order to distinguish them from some high-order GNN variants~\citep{morris2019weisfeiler,maron2019provably,chen2019equivalence} where the effective message passing happens between high-order node tuples instead of nodes.

GNNs' message passing scheme mimics the 1-dimensional Weisfeiler-Lehman (1-WL) algorithm~\citep{weisfeiler1968reduction}, which iteratively refines a node's color according to its current color and the multiset of its neighbors' colors. This procedure essentially encodes a rooted subtree around each node into its final color, where the rooted subtree is constructed by recursively expanding the neighbors of the root node. One critical reason for GNN's success in graph classification is because two graphs sharing many identical or similar rooted subtrees are more likely classified into the same class, which actually aligns with the inductive bias that two graphs are similar if they have many common substructures~\citep{vishwanathan2010graph}.

Despite this, rooted subtrees are still limited in terms of expressing \textbf{all possible substructures} that can appear in a graph. It is likely that two graphs, despite sharing a lot of identical rooted subtrees, are not similar at all because their other substructure patterns are not similar. Take the two graphs $G_1$ and $G_2$ in Figure~\ref{fig:3regular} as an example. If we apply 1-WL or a message passing GNN to them, the two graphs will always have the same representation no matter how many iterations/layers we use. This is because \textbf{all} nodes in the two graphs have identical rooted subtrees across \textbf{all} tree heights. However, the two graphs are quite different from a holistic perspective. $G_1$ is composed of two triangles, while $G_2$ is a hexagon. The intrinsic reason for such a failure is that rooted subtrees have limited expressiveness for representing general graphs, especially those with cycles.


\begin{figure*}[t]
\centering
\includegraphics[width=0.95\textwidth]{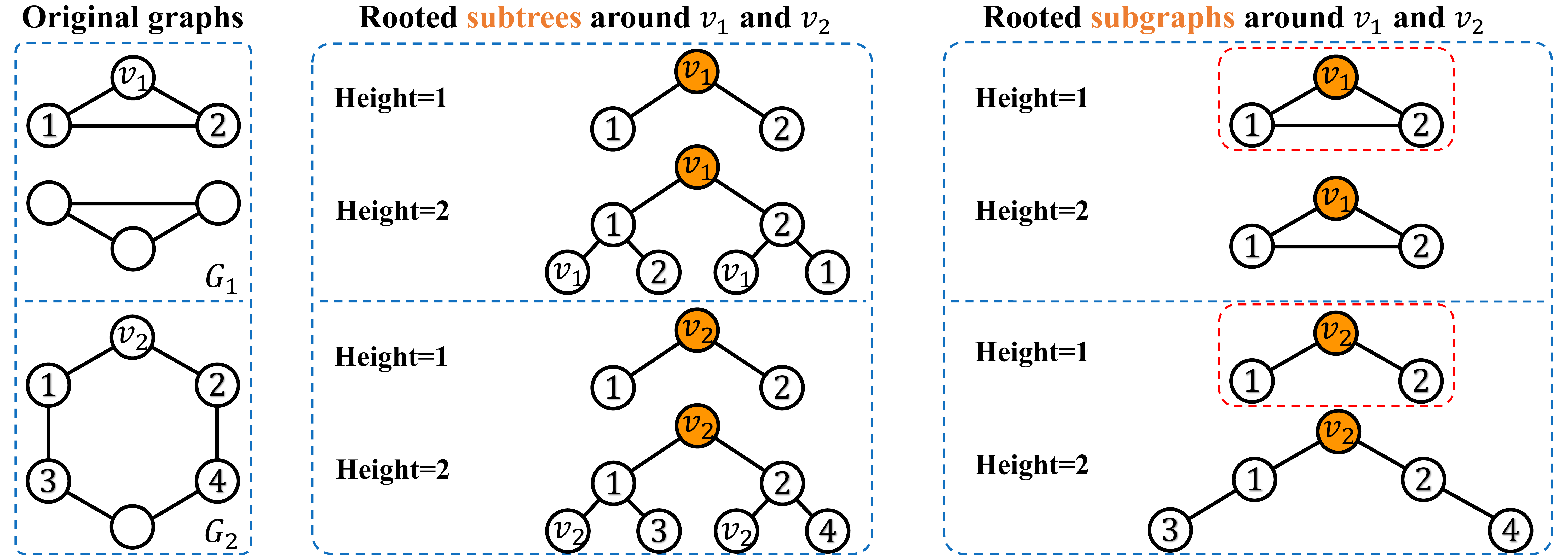}
\caption{The two original graphs $G_1$ and $G_2$ are non-isomorphic. $G_1$ is composed of two triangles, while $G_2$ is a hexagon. However, both 1-WL and message passing GNNs cannot differentiate them, since all nodes in the two graphs share identical rooted subtrees at any height (see the rooted subtrees around $v_1$ and $v_2$ in the middle block for example). In comparison, we can discriminate the two graphs by comparing their height-1 rooted subgraphs around any nodes. For example, the height-1 rooted subgraph around $v_1$ is a closed triangle, but the height-1 rooted subgraph around $v_2$ is an open triangle (see the red boxes in the right block).}
\label{fig:3regular}
\end{figure*}

To address this issue, we propose Nested Graph Neural Networks (NGNNs). The core idea is, instead of encoding a rooted subtree, we want the final representation of a node to encode a \textbf{rooted subgraph} (local $h$-hop subgraph) around it. The subgraph is not restricted to be of any particular graph type such as tree, but serves as a general description of the local neighborhood around a node. Rooted subgraphs offer much better representation power than rooted subtrees, e.g., we can easily discriminate the two graphs in Figure~\ref{fig:3regular} by only comparing their height-1 rooted subgraphs.

To represent a graph with rooted subgraphs, NGNN uses \textbf{two} levels of GNNs: base (inner) GNNs and an outer GNN. By extracting a local rooted subgraph around each node, NGNN first applies a base GNN to each node's subgraph independently. Then, a subgraph pooling layer is applied to each subgraph to aggregate the intermediate node representations into a subgraph representation. This subgraph representation is used as the final representation of the root node. Rather than encoding a rooted subtree, this final node representation encodes the local subgraph around it, which contains more information than a subtree. Finally, all the final node representations are further fed into an outer GNN to learn a representation for the entire graph. Figure~\ref{fig:overall} shows one NGNN implementation using message passing GNNs as the base GNNs and a simple graph pooling layer as the outer GNN.

One may wonder that the base GNN seems to still learn only rooted subtrees if it is message-passing-based. Then why is NGNN more powerful than GNN? One key reason lies in the subgraph pooling layer. Take the height-1 rooted subgraphs (marked with red boxes) around $v_1$ and $v_2$ in Figure~\ref{fig:3regular} as an example. 
Although $v_1$ and $v_2$'s height-1 rooted subtrees are still the same, their neighbors (labeled by 1 and 2) have different height-1 rooted subtrees. Thus, applying a one-layer message passing GNN plus a subgraph pooling as the base GNN is sufficient to discriminate $G_1$ and $G_2$.




\begin{figure*}[t]
\centering
\includegraphics[width=0.99\textwidth]{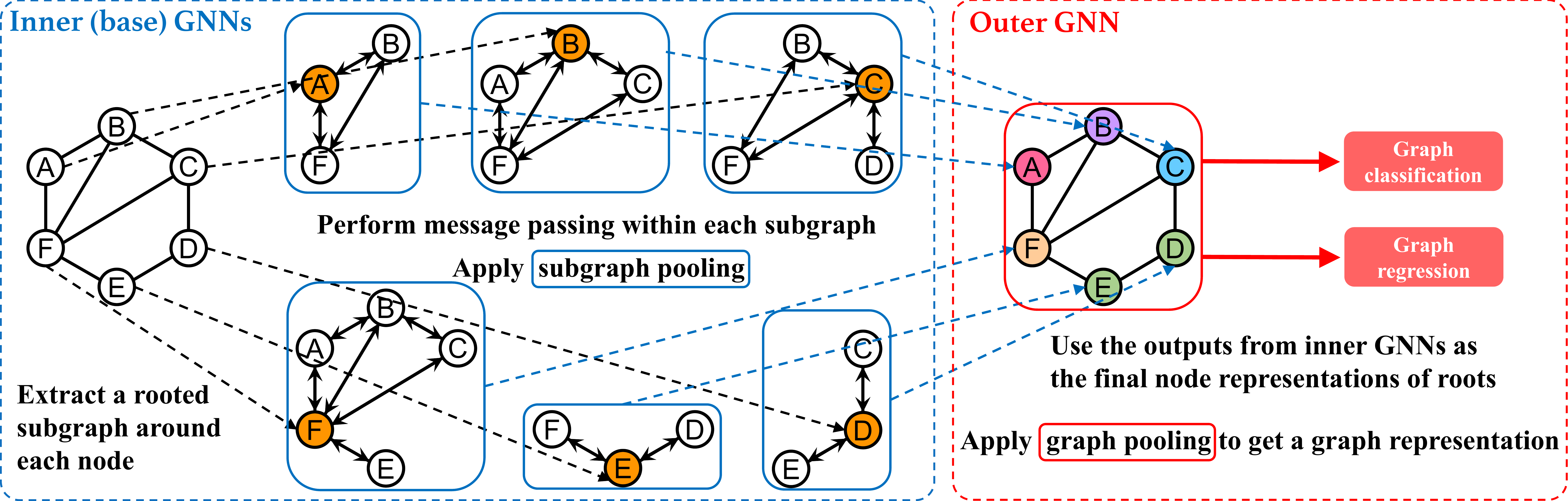}
\caption{A particular implementation of the NGNN framework. It first extracts (copies) a rooted subgraph (height=1) around each node from the original graph, and then applies a base GNN with a subgraph pooling layer to each rooted subgraph independently to learn a subgraph representation. The subgraph representation is used as the root node's final representation in the original graph. Then, a graph pooling layer is used to summarize the final node representations into a graph representation.}
\label{fig:overall}
\end{figure*}

The NGNN framework has multiple exclusive advantages. Firstly, it allows freely choosing the base GNN, and can enhance the base GNN's representation power in a plug-and-play fashion. Theoretically, we proved that NGNN is more powerful than message passing GNNs and 1-WL by being able to discriminate almost all $r$-regular graphs (where 1-WL always fails). Secondly, by extracting rooted subgraphs, NGNN allows augmenting the initial features of a node with subgraph-specific structural features such as distance encoding~\citep{li2020distance} to improve the quality of the learned node representations. Thirdly, unlike other more powerful graph neural networks, especially those based on higher-order WL tests~\citep{morris2019weisfeiler,maron2019provably,chen2019equivalence,morris2020weisfeiler}, NGNN still has linear time and space complexity w.r.t. graph size like standard message passing GNNs, thus maintaining good scalability. 
We demonstrate the effectiveness of the NGNN framework in various synthetic/real-world graph classification/regression datasets. On synthetic datasets, NGNN demonstrates higher-than-1-WL expressive power, matching very well with our theorem. On real-world datasets, NGNN consistently enhances a wide range of base GNNs' performance, achieving highly competitive results on all datasets. 




\section{Preliminaries}

\subsection{Notation and problem definition}
We consider the graph classification/regression problem. Given a graph $G=(V,E)$ where $V=\{1,2,\ldots n\}$ is the node set and $E\subseteq V\times V$ is the edge set, we aim to learn a function mapping $G$ to its class or target value $y$. The nodes and edges in $G$ can have feature vectors associated with them, denoted by $\vx_i$ (for node $i$) and $\ve_{ij}$ (for edge $(i,j)$), respectively.

\subsection{Weisfeiler-Lehman test}
The Wesfeiler-Lehman (1-WL) test~\citep{weisfeiler1968reduction} is a popular algorithm for graph isomorphism checking. The classical 1-WL works as follows. At first, all nodes receive a color 1. Each node collects its neighbors' colors into a multiset. Then, 1-WL will update each node's color so that two nodes get the same new color if and only if their current colors are the same and they have identical multisets of neighbor colors. Repeat this process until the number of colors does not increase between two iterations. Then, 1-WL will return that two graphs are non-isomorphic if their node colors are different at some iteration, or fail to determine whether they are non-isomorphic. See \citep{shervashidze2011weisfeiler,zhang2017weisfeiler} for more details.

1-WL essentially encodes the rooted subtrees around each node at different heights into its color representations. Figure~\ref{fig:3regular} middle shows the rooted subtrees around $v_1$ and $v_2$. Two nodes will have the same color at iteration $h$ if and only if their height-$h$ rooted subtrees are the same.

\section{Nested Graph Neural Network}
In this section, we introduce our Nested Graph Neural Network (NGNN) framework and theoretically demonstrate its higher representation power than message passing GNNs. 


\subsection{Limitations of the message passing GNNs}
Most existing GNNs follow the message passing framework~\citep{gilmer2017neural}: given a graph $G$, each node's hidden state $\vh_v^{t+1}$ is updated based on its previous state $\vh_v^{t}$ and the messages $\vm_v^{t+1}$ from its neighbors
\begin{align}
    \vh_v^{t+1} = U_t(\vh_v^t, \vm_v^{t+1}),~~\text{where}~~\vm_{v}^{t+1} = \sum_{u \in N(v|G)} M_t (\vh_v^t, \vh_u^t, \ve_{vu}).\label{eq:mpnn}
\end{align}
Here $M_t, U_t$ are the message and update functions at time stamp $t$, $\ve_{vu}$ is the feature of edge $(v,u)$, and $N(v|G)$ is the set of $v$'s neighbors in graph $G$. The initial hidden states $\vh_{v}^0$ are given by the raw node features $\vx_v$. After $T$ time stamps (iterations), the final node representations $\vh_v^T$ are summarized into a whole-graph representation with a readout (pooling) function $R$ (e.g., mean or sum):
\begin{align}
    \vh_G = R(\{\vh_v^T | v\in G\}).
\end{align}


Such a message passing (or neighbor aggregation) scheme iteratively aggregates neighbor information into a center node's hidden state, making it encode a local rooted subtree around the node. The final node representations will contain both the local structure and feature information around nodes, enabling node-level tasks such as node classification. After a pooling layer, these node representations can be further summarized into a graph representation, enabling graph-level tasks. When there is no edge feature and the node features are from a countable space, it is shown that message passing GNNs are at most as powerful as the 1-WL test for discriminating non-isomorphic graphs~\citep{xu2018powerful,morris2019weisfeiler}.

For an $h$-layer message passing GNN, it will give two nodes the same final representation if they have identical \textbf{height-$h$ rooted subtrees} (i.e., both the structures and the features on the corresponding nodes/edges are the same). 
If two graphs have a lot of identical (or similar) rooted subtrees, they will also have similar graph representations after pooling. This insight is crucial for the success of modern GNNs in graph classification, because it aligns with the inductive bias that two graphs are similar if they have many common substructures. Such insight has also been used in designing the WL subtree kernel~\citep{shervashidze2011weisfeiler}, a state-of-the-art graph classification method before GNNs.

However, message passing GNNs have several limitations. Firstly, rooted subtree is only one specific substructure. It is not general enough to represent arbitrary subgraphs, especially those with cycles due to the natural restriction of tree structure. 
Secondly, using rooted subtree as the elementary substructure results in a discriminating power bounded by the 1-WL test. For example, all $n$-node $r$-regular graphs cannot be discriminated by message passing GNNs. Thirdly, standard message passing GNNs do not allow using root-node-specific structural features (such as the distance between a node and the root node) to improve the quality of the learned root node's representation. 
We might need to break through such limitations in order to design more powerful GNNs.

\subsection{The NGNN framework}\label{section:ngnn}
To address the above limitations, we propose the Nested Graph Neural Network (NGNN) framework. NGNN no longer aims to encode a rooted subtree around each node. Instead, in NGNN, each node's final representation encodes the general local \emph{subgraph} information around it more than a subtree, so that two graphs sharing a lot of identical or similar \textit{rooted subgraphs} will have similar representations. 

\begin{definition}\label{def:rooted_subgraph}
\textbf{(Rooted subgraph)} Given a graph $G$ and a node $v$, the height-$h$ rooted subgraph $G_v^h$ of $v$ is the subgraph induced from $G$ by the nodes within $h$ hops of $v$ (including $h$-hop nodes).
\end{definition}



To make a node's final representation encode a rooted subgraph, we need to compute a subgraph representation. To achieve this, we resort to an arbitrary GNN, which we call the \textit{base GNN} of NGNN. For example, the base GNN can be simply a message passing GNN, which performs message passing \textbf{within} each rooted subgraph to learn an intermediate representation for \textbf{every} node of the subgraph, and then uses a pooling layer to summarize a subgraph representation from the intermediate node representations. This subgraph representation is used as the final representation of the root node in the original graph. 
Take root node $w$ as an example. We first perform $T$ rounds of message passing within node $w$'s rooted subgraph $G_w^h$. Let $v$ be any node appearing in $G_w^h$. We have
\begin{align}
    \vh_{v,G_w^h}^{t+1} &= U_t(\vh_{v,G_w^h}^t, \vm_{v,G_w^h}^{t+1}),~~\text{where}~~ \vm_{v,G_w^h}^{t+1} = \sum_{u \in N(v|G_w^h)} M_t (\vh_{v,G_w^h}^t, \vh_{u,G_w^h}^t, \ve_{vu}). \label{eq:mp-1}
\end{align}
Here $M_t,U_t$ are the message and update functions of the base GNN at time stamp $t$, $N(v|G_w^h)$ denotes the set of $v$'s neighbors within $w$'s rooted subgraph $G_w^h$, and $\vh_{v,G_w^h}^{t+1}$ and $\vm_{v,G_w^h}^{t+1}$ denote node $v$'s hidden state and message \textbf{specific to} rooted subgraph $G_w^h$ at time stamp $t+1$. Note that when node $v$ attends different nodes' rooted subgraphs, its hidden states and messages will also be \textbf{different}. This is in contrast to standard GNNs where a node's hidden state and message at time $t$ is the same regardless of which root node it contributes to. For example, $\vh_v^{t+1}$ and $\vm_v^{t+1}$ in Eq.~\ref{eq:mpnn} do not depend on any particular rooted subgraph.


After $T$ rounds of message passing, we apply a \textit{subgraph pooling layer} to summarize a subgraph representation $\vh_{G_w^h}$ from the intermediate node representations $\{\vh_{v,G_w^h}^T | v\in G_w^h\}$.
\begin{align}
    \vh_w :&= \vh_{G_w^h} = R_0(\{\vh_{v,G_w^h}^T | v\in G_w^h\}), \label{eq:sg-pool}
\end{align}
where $R_0$ is the subgraph pooling layer. This subgraph representation $\vh_{G_w^h}$ will be used as root node $w$'s final representation $\vh_w$ in the original graph. Note that
the base GNNs are simultaneously applied to all nodes' rooted subgraphs to return a final node representation for every node in the original graph, and all the base GNNs share the same parameters. With such node representations, NGNN uses an \textit{outer GNN} to further process and aggregate them into a graph representation of the whole graph. For simplicity, we let the outer GNN be simply a \textit{graph pooling layer} denoted by $R_1$:
\begin{align} \label{eq:g-pool}
    \vh_G := R_1(\{\vh_w | w\in G\}).
\end{align}

The Nested GNN framework can be understood as a two-level GNN, or a \textbf{GNN of GNNs}---the inner subgraph-level GNNs (base GNNs) are used to learn node representations from their rooted subgraphs, while the outer graph-level GNN is used to return a whole-graph representation from the inner GNNs' outputs. The inner GNNs all share the same parameters which are trained end-to-end with the outer GNN. Figure~\ref{fig:overall} depicts the implementation of the NGNN framework described above.

Compared to message passing GNNs, NGNN changes the ``receptive field'' of each node from a rooted subtree to a rooted subgraph, in order to capture better local substructure information. The rooted subgraph is read by a base GNN to learn a subgraph representation. Finally, the outer GNN reads the subgraph representations output by the base GNNs to return a graph representation.

Note that, when we apply the base GNN to a rooted subgraph, this rooted subgraph is extracted (copied) out of the original graph and treated as a completely independent graph from the other rooted subgraphs and the original graph.
This allows the same node to have \textbf{different} representations within different rooted subgraphs. For example, in Figure~\ref{fig:overall}, the same node $B$ appears in four different rooted subgraphs. Sometimes it is the root node, while other times it is a 1-hop neighbor of the root node. NGNN enables learning different representations for the same node when it appears in different rooted subgraphs, in contrast to standard GNNs where a node only has one single representation at one time stamp (Eq.~\ref{eq:mpnn}). Similarly, NGNN also enables using different initial features for the same node when it appears in different rooted subgraphs. This allows us to customize a node's initial features based on its structural role within a rooted subgraph, as opposed to using the same initial features for a node across all rooted subgraphs. For example, we can optionally augment node $B$'s initial features with the distance between node $B$ and the root---when node $B$ is the root node, we give it an additional feature $0$; and when $B$ is a $k$-hop neighbor of the root, we give it an additional feature $k$. Such feature augmentation may help better capture a node's structural role within a rooted subgraph. It is an exclusive advantage of NGNN and is \textbf{not} possible in standard GNNs.



\subsection{The representation power of NGNN}
We theoretically characterize the additional expressive power of NGNN (using message passing GNNs as base GNNs) as opposed to standard message passing GNNs. 
We focus on the ability to discriminate regular graphs because they form an important category of graphs which standard GNNs cannot represent well. Using 1-WL or message passing GNNs, any two $n$-sized $r$-regular graphs will have the same representation, unless discriminative node features are available. In contrast, we prove that NGNN can distinguish almost all pairs of $n$-sized $r$-regular graphs regardless of node features. 

    

\begin{definition}
If the message passing (Eq.~\ref{eq:mp-1}) and the two-level graph pooling (Eqs.~\ref{eq:sg-pool},\ref{eq:g-pool}) are all injective given input from a countable space, then the NGNN is called \textbf{proper}.
\end{definition}


A proper NGNN always exists due to the representation power of fully-connected neural networks used for message passing and Deep Set for graph pooling~\citep{zaheer2017deep}. For all pairs of graphs that 1-WL can discriminate, there always exists a proper NGNN that can also discriminate them, because two graphs discriminated by 1-WL means they must have different multisets of rooted subtrees at some height $h$, while a rooted subtree is always included in a rooted subgraph with the same height.

Now we present our main theorem.
\begin{theorem}\label{thm:power}
Consider all pairs of $n$-sized $r$-regular graphs, where $3\leq r< (2\log n)^{1/2}$. For any small constant $\epsilon>0$, there exists a proper NGNN using at most $\lceil (\frac{1}{2} + \epsilon)\frac{\log n}{\log(r-1-\epsilon)}\rceil$-height rooted subgraphs and $\lceil\epsilon\frac{\log n}{\log(r-1-\epsilon)}\rceil$-layer message passing, which distinguishes almost all ($1-o(1)$) such pairs of graphs. 
\end{theorem}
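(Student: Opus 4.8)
The plan is to exploit the \emph{properness} of the NGNN to reduce the claim to a purely combinatorial statement about random regular graphs, and then to settle that statement with the Poisson approximation for short-cycle counts. First I would record the consequence of properness: since the two pooling maps $R_0,R_1$ and the message passing are all injective on their countable domains, the graph representation $\vh_G$ is an injective function of the multiset $\{\!\{\vh_w : w\in V(G)\}\!\}$, and each $\vh_w$ is in turn an injective function of the multiset of height-$\ell$ rooted subtrees \emph{within} the extracted subgraph $G_w^h$ produced by the $\ell$ message-passing rounds. Consequently, to prove that a proper NGNN separates $G_1$ and $G_2$ it suffices to exhibit a \emph{single} statistic that is determined by this multiset invariant and takes different values on $G_1$ and $G_2$.

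The statistic I would use is a short-cycle count. The key mechanism is that, unlike WL unrolling on the full graph (which cannot detect cycles and is exactly why $1$-WL fails on regular graphs), message passing run \emph{inside the copied subgraph} $G_w^h$ does detect cycles: for an ``interior'' node $v$ at distance at most $h-\ell$ from $w$, the induced ball $B_{G_w^h}(v,\ell)$ equals the true ball $B_G(v,\ell)$, so its height-$\ell$ subtree-in-$G_w^h$ records whether $v$ lies on a cycle of length at most $2\ell$. Since the root $w$ is itself interior and can be singled out using the distance-to-root feature augmentation that NGNN permits, the invariant $\{\!\{\vh_w\}\!\}$ determines, for every $w$, the isomorphism type of its true $\ell$-neighborhood, and hence the multiset of these local types over all of $V(G)$. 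I would then prove a deterministic decoding lemma: conditioned on the high-probability event that the short cycles of $G$ are vertex-disjoint and the girth is large, this multiset determines the number $C_c(G)$ of cycles of each length $c\le 2\ell$, because each such cycle contributes a predictable, disjoint family of deficient local types ($c$ nodes per $c$-cycle, each seeing the cycle once $\ell\ge\lceil c/2\rceil$).

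It then remains to show $C_{c_0}(G_1)\neq C_{c_0}(G_2)$ with probability $1-o(1)$ for two independent uniform $n$-vertex $r$-regular graphs, where $c_0=2\ell$. Here I would invoke the classical fact that in the regime $3\le r<(2\log n)^{1/2}$ the cycle counts of lengths up to $\Theta(\log_{r-1}n)$ are asymptotically independent Poisson with means $\lambda_c=(r-1)^c/(2c)$; the upper bound on $r$ is precisely what keeps the expected short-cycle count controlled and makes this approximation valid up to the lengths we need. For $c_0=2\ell\approx 2\epsilon\log_{r-1}n$ one has $\lambda_{c_0}\approx n^{2\epsilon}/(4\ell)\to\infty$, so $C_{c_0}$ is genuinely spread (standard deviation $\Theta(\lambda_{c_0}^{1/2})\to\infty$) and satisfies an anti-concentration bound; two independent copies then coincide with probability $\Theta(\lambda_{c_0}^{-1/2})=o(1)$. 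Chaining the three steps yields that a proper NGNN of the stated height and depth separates a $1-o(1)$ fraction of pairs. In this argument the depth budget $\ell\approx\epsilon\log_{r-1}n$ fixes the cycle length $c_0$ whose count diverges, while the height budget $h\approx(\tfrac12+\epsilon)\log_{r-1}n$ enters to guarantee that the relevant $\ell$-balls sit in the strict interior of the copied subgraphs, free of frontier truncation.

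The main obstacle I expect is the decoding lemma of the second paragraph: making rigorous exactly which statistic the shallow-message-passing-on-subgraph invariant recovers, and showing that boundary truncation (most nodes of $G_w^h$ lie near its distance-$h$ frontier, where local types are corrupted) and the occasional overlap of two short cycles do not spoil the extracted counts. This is where the probabilistic structure of random regular graphs — large girth, vertex-disjointness of short cycles, and concentration of ball sizes — must be deployed carefully to certify that the clean ``one cycle, one signature'' correspondence holds asymptotically almost surely, so that $C_{c_0}(G)$ really is a function of the NGNN invariant on the relevant high-probability event.
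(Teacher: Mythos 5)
There is a genuine gap, and it sits exactly where you flagged your ``main obstacle,'' but it is worse than a technicality: your core mechanism is false. You claim that because $B_{G_w^h}(v,\ell)=B_G(v,\ell)$ for interior $v$, the height-$\ell$ subtree of $v$ ``records whether $v$ lies on a cycle of length at most $2\ell$.'' It does not. The 1-WL unrolling subtree is determined by the ball but not injectively: on a locally $r$-regular ball the height-$\ell$ subtree of every interior node is the same complete tree (root of degree $r$, internal branching $r-1$) \emph{whether or not the node lies on a short cycle} --- this cycle-blindness of rooted subtrees is precisely the failure of 1-WL on regular graphs that motivates the whole paper. The same objection kills the stronger claim that $\vh_w$ ``determines the isomorphism type of its true $\ell$-neighborhood'': message passing, even with a distance-to-root feature, computes 1-WL-with-labels colors, not ball isomorphism types. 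In a featureless proper NGNN the \emph{only} structural signal is the degree deficiency at the subgraph frontier (nodes at distance $h$ lose their outgoing edges), and with only $\ell\approx\epsilon\log_{r-1}n$ layers that signal propagates inward only to depth $K-\ell\approx\frac12\log_{r-1}n$; it never reaches the interior nodes whose short cycles you want to count. Hence your statistic $C_{c_0}(G)$ is simply not a function of the NGNN invariant under the mechanism you describe, and no amount of care in the decoding lemma (disjointness of cycles, girth conditioning) can repair a statistic the network cannot see. Your reading of the height budget also inverts its actual role: you treat frontier truncation as noise to be kept away from the relevant $\ell$-balls, whereas in the paper's proof the truncation \emph{is} the mechanism --- it acts as an implicit distance feature whose induced ``new representations'' are the entire source of discriminating power.

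For comparison, the paper's proof runs as follows: import from the distance-encoding analysis of Li et al.\ (Lemma~\ref{lemma:1}) that for two independent uniform $r$-regular graphs, the edge configurations between consecutive BFS shells $Q^i$ and $Q^{i+1}$ differ at some depth $i\in(\frac12\frac{\log n}{\log(r-1-\epsilon)},(\frac12+\epsilon)\frac{\log n}{\log(r-1-\epsilon)})$ with probability $1-o(n^{-1})$ --- the $\frac12$ exponent comes from a birthday-paradox effect (shells must reach size $\approx\sqrt n$ before collisions occur), not from cycle Poisson statistics. Then (Lemma~\ref{lemma:2}) the frontier nodes of $G_w^K$, having degree deficits, acquire non-default representations after one layer, and $\ell$ further layers propagate these inward to the depth where the configurations differ, at which point injective subgraph pooling separates $\vh_{v_1}$ from $\vh_{v_2}$; a union bound over the $n$ nodes of $G^{(2)}$ plus injective graph pooling finishes. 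If you wanted to salvage a cycle-counting route, you would have to (i) make the distance-to-root augmentation load-bearing (so that shell-collision events --- nodes with two back-edges or a within-shell edge --- become visible colors, which is again the edge-configuration machinery, not ball isomorphism types), and (ii) shrink $h$ to roughly $\ell$ so that most balls see at most one short cycle; your Poisson/anti-concentration endgame (valid for $c_0=2\ell$ with $\epsilon$ small enough that $(r-1)^{2c_0}=o(n)$, giving coincidence probability $\Theta(\lambda_{c_0}^{-1/2})=o(1)$) is the one piece of the proposal that stands, but it is downstream of a decoding step that, as written, fails at its first move.
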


We include the proof in Appendix~\ref{proof:power}. Theorem~\ref{thm:power} has three implications. Firstly, since NGNN can discriminate almost all $r$-regular graphs where 1-WL always fails, it is \textbf{strictly more powerful} than 1-WL and message passing GNNs. Secondly, it implies that NGNN does not need to extract subgraphs with a too large height (about $\frac{1}{2}\frac{\log n}{\log{(r-1)}}$) to be more powerful. 
Moreover, NGNN is already powerful with very few layers, i.e., an arbitrarily small constant $\epsilon$ times $\frac{\log n}{\log{(r-1)}}$ (as few as 1 layer). This benefit comes from the subgraph pooling (Eq.~\ref{eq:sg-pool}), freeing us from using deep base GNNs. We further conduct a simulation experiment in Appendix~\ref{appendix:simu} to verify Theorem~\ref{thm:power} by testing how well NGNN discriminates $r$-regular graphs in practice. The results match almost perfectly with our theory.

Although NGNN is strictly more powerful than 1-WL and 2-WL (1-WL and 2-WL have the same discriminating power~\citep{maron2019provably}), it is unclear whether NGNN is more powerful than 3-WL. Our early-stage analysis shows both NGNN and 3-WL cannot discriminate strongly regular graphs with the same parameters~\cite{brouwer2012strongly}. We leave the exact comparison between NGNN and 3-WL to future work.


\subsection{Discussion}\label{sec:discussion}


\textbf{Base GNN.} NGNN is a general plug-and-play framework to increase the power of a base GNN. For the base GNN, we are not restricted to message passing GNNs as described in Section~\ref{section:ngnn}.
For example, we can also use GNNs approximating the power of higher-dimensional WL tests, such as 1-2-3-GNN~\cite{morris2019weisfeiler} and PPGN/Ring-GNN~\citep{maron2019provably,chen2019equivalence}, as the base GNN. In fact, one limitation of these high-order GNNs is their $\mathcal{O}(n^3)$ complexity. Using the NGNN framework we can greatly alleviate this by applying the higher-order GNN to multiple small rooted subgraphs instead of the whole graph.
Suppose a rooted subgraph has at most $c$ nodes, then by applying a high-order GNN to all $n$ rooted subgraphs, we can reduce the time complexity from $\mathcal{O}(n^3)$ to $\mathcal{O}(n c^3)$. 

\textbf{Complexity.} We compare the time complexity of NGNN (using message passing GNNs as base GNNs) with a standard message passing GNN. Suppose the graph has $n$ nodes with a maximum degree $d$, and the maximum number of nodes in a rooted subgraph is $c$. Each message passing iteration in a standard message passing GNN takes $\mathcal{O}(nd)$ operations. In NGNN, we need to perform message passing over all $n$ nodes' rooted subgraphs, which takes $\mathcal{O}(n\cdot cd)$. We will keep $c$ small (which can be achieved by using a small $h$) to improve NGNN's scalability. Additionally, a small $c$ enables the base GNN to focus on learning local subgraph patterns. 


In Appendix~\ref{appendix:design_choices}, we discuss some other design choices of NGNN.

\section{Related work}

Understanding GNN's representation power is a fundamental problem in GNN research. \citet{xu2018powerful} and \citet{morris2019weisfeiler} first proved that the discriminating power of message passing GNNs is bounded by the 1-WL test, namely they cannot discriminate two non-isomorphic graphs that 1-WL fails to discriminate (such as $r$-regular graphs). Since then, there is increasing effort in enhancing GNN's discriminating power beyond 1-WL~\citep{morris2019weisfeiler,chen2019equivalence,maron2019provably,murphy2019relational,li2020distance,bouritsas2020improving,you2021identity,beaini2020directional,morris2020weisfeiler}. Many GNNs have been proposed to mimic higher-dimensional WL tests, such as 1-2-3-GNN~\citep{morris2019weisfeiler}, Ring-GNN~\citep{chen2019equivalence} and PPGN~\citep{maron2019provably}. However, these models generally require learning the representations of all node tuples of certain cardinality (e.g., node pairs, node triples and so on), thus cannot leverage the sparsity of graph structure and are difficult to scale to large graphs. 
Some works study the universality of GNNs for approximating any invariant or equivariant functions over graphs~\citep{maron2018invariant,chen2019equivalence,maron2019universality,keriven2019universal,azizian2020characterizing}. However, reaching universality would require polynomial($n$)-order tensors, which hold more theoretical value than practical applicability. \citet{dasoulas2019coloring} propose to augment nodes of identical attributes with different colors, which requires exhausting all the coloring choices to reach universality. Similarly, Relational Pooling (RP)~\citep{murphy2019relational} uses the ensemble of permutation-aware functions over graphs to reach universality, which requires exhausting all $n!$ permutations to achieve its theoretical power. Its local version Local Relational Pooling (LRP)~\citep{chen2020can} applies RP over subgraphs around nodes, which is similar to our work yet still requires exhausting node permutations in local subgraphs and even more loses RP's theoretical power. In contrast, NGNN maintains a controllable cost by only applying a message passing GNN to local subgraphs, and is guaranteed to be more powerful than 1-WL.


Because of the high cost of mimicking high-dimensional WL tests, several works have been proposed to increase GNN's representation power within the message passing framework. Observing that different neighbors are indistinguishable during neighbor aggregation, some works propose to add one-hot node index features or random features to GNNs~\citep{loukas2019graph,sato2020random}. These methods work well when nodes naturally have distinct identities irrespective of the graph structure. However, although making GNNs more discriminative, they also lose some of GNNs' generalization ability by not being able to guarantee nodes with identical neighborhoods to have the same embedding; the resulting models are also no longer permutation invariant. Repeating random initialization helps with avoiding such an issue but gets much slower convergence~\cite{abboud2020surprising}. An exception is structural message-passing (SMP)~\citep{vignac2020building}, which propagates one-hot node index features to learn a global $n\times d$ feature matrix for each node. The feature matrix is further pooled to learn a permutation-invariant node representation. 

On the contrary, some works propose to use structural features to augment GNNs without hurting the generalization ability of GNNs. SEAL~\citep{zhang2018link,zhang2020revisiting}, IGMC~\citep{Zhang2020Inductive} and DE~\citep{li2020distance} use distance-based features, where a distance vector w.r.t. the target node set to predict is calculated for each node as its additional features. 
Our NGNN framework is naturally compatible with such distance-based features due to its independent rooted subgraph processing. 
GSN~\citep{bouritsas2020improving} uses the count of certain substructures to augment node/edge features, which also surpasses 1-WL theoretically. However, GSN needs a properly defined substructure set to incorporate domain-specific inductive biases, while NGNN aims to learn arbitrary substructures around nodes without the need to predefine a substructure set.

Concurrent to our work, \citet{you2021identity} propose Identity-aware GNN (ID-GNN). ID-GNN uses different weight parameters between each root node and its context nodes during message passing. It also extracts a rooted subgraph around each node, and thus can be viewed as a special case of NGNN with: 1) the number of message passing layers equivalent to the subgraph height, 2) directly using the root node's intermediate representation as its final representation without subgraph pooling, and 3) augmenting initial node features with 0/1 ``identity''. However, the extra power of ID-GNN only comes from the ``identity'' feature, while the power of NGNN comes from the subgraph pooling---without using any node features, NGNN is still provably more discriminative than 1-WL. Another similar work to ours is natural graph network (NGN)~\citep{de2020natural}. NGN argues that graph convolution weights need not be shared among all nodes but only (locally) isomorphic nodes. If we view our distance-based node features as refining the graph convolution weights so that nodes within a center node's neighborhood are no longer treated symmetrically, then our NGNN reduces to an NGN.

The idea of independently performing message passing within $k$-hop neighborhood is also explored in $k$-hop GNN~\citep{nikolentzos2020k} and MixHop~\citep{abu2019mixhop}. However, MixHop directly concatenates the aggregation results of neighbors at different hops as the root representation, which ignores the connections between other nodes in the rooted subgraph. $k$-hop GNN sequentially performs message passing for $k$-hop, $k-1$-hop, ..., and 0-hop node (the update of $(i\!-\!1)$-hop nodes depend on the updated states of $i$-hop nodes), while NGNN simultaneously performs message passing for all nodes in the subgraph thus is more parallelizable. Both MixHop and $k$-hop GNN directly use the root node's representation as its final node representation. In contrast, NGNN uses a subgraph pooling to summarize all node representations within the subgraph as the final root representation, which distinguishes NGNN from other $k$-hop models. As Theorem~\ref{thm:power} shows, the subgraph pooling enables using a much smaller number of message passing layers $l$ (as small as 1) than the depth $k$ of the subgraph, while MixHop and $k$-hop GNN always require $l \geq k$. 
MixHop and $k$-hop GNN also do not have the strong theoretical power of NGNN to discriminate $r$-regular graphs. Like SEAL and $k$-hop GNN, G-Meta~\citep{huang2020graph} is another work extracting subgraphs around nodes/links. It focuses specifically on a meta-learning setting.


\section{Experiments}
In this section, we study the effectiveness of the NGNN framework for graph classification and regression tasks. In particular, we want to answer the following questions:

\textbf{Q1}~Can NGNN reach its theoretical power to discriminate 1-WL-indistinguishable graphs?\\
\textbf{Q2}~How often and how much does NGNN improve the performance of a base GNN?\\
\textbf{Q3}~How does NGNN perform in comparison to state-of-the-art GNN methods in open benchmarks?\\
\textbf{Q4}~How much extra computation time does NGNN incur?

We implement the NGNN framework based on the PyTorch Geometric library~\citep{fey2019fast}. Our code is available at \url{https://github.com/muhanzhang/NestedGNN}.

\subsection{Datasets}
To answer \textbf{Q1}, we use a simulation dataset of $r$-regular graphs and the EXP dataset~\citep{abboud2020surprising} containing 600 pairs of 1-WL-indistinguishable but non-isomorphic graphs. To answer \textbf{Q2}, we use the QM9 dataset~\citep{ramakrishnan2014quantum,wu2018moleculenet} and the TU datasets~\citep{KKMMN2016}. QM9 contains 130K small molecules. The task here is to perform regression
on twelve targets representing energetic, electronic, geometric, and thermodynamic properties, based on the graph structure and node/edge features. TU contains five graph classification datasets including D\&D~\citep{dobson2003distinguishing}, MUTAG~\citep{debnath1991structure}, PROTEINS~\citep{dobson2003distinguishing}, PTC\_MR~\citep{toivonen2003statistical}, and ENZYMES~\citep{schomburg2004brenda}. We used the datasets provided by PyTorch Geometric~\citep{fey2019fast}, where for QM9 we performed unit conversions to match the units used by \citep{morris2019weisfeiler}. The evaluation metric is Mean Absolute Error (MAE) for QM9 and Accuracy (\%) for TU.
To answer \textbf{Q3}, we use two Open Graph Benchmark (OGB) datasets~\citep{hu2020open}, \texttt{ogbg-molhiv} and \texttt{ogbg-molpcba}. 
The \texttt{ogbg-molhiv} dataset contains 41K small molecules, the task of which is to classify whether a molecule inhibits HIV virus or not. ROC-AUC is used for evaluation. The \texttt{ogbg-molpcba} dataset contains 438K molecules with 128 classification tasks. The evaluation metric is Average Precision (AP) averaged over all the tasks. We include the statistics for QM9 and OGB datasets in Table~\ref{table:stat}.

\captionsetup[table]{font=large}
\begin{table*}[t]
\begin{center}
  \resizebox{0.9\textwidth}{!}{
  \begin{minipage}[t]{1.2\textwidth}
  \caption{Statistics and evaluation metrics of the QM9 and OGB datasets.}
  \label{table:stat}
  \begin{tabular}{lcccccccc}
    \toprule
    \textbf{Dataset}&\textbf{\#Graphs}&\textbf{Avg. \#nodes}&\textbf{Avg. \#edges}&\textbf{Split ratio}&\textbf{\#Tasks}&\textbf{Task type}&\textbf{Metric} \\
    \midrule
    \texttt{QM9} & 129,433 & 18.0  & 18.6 & 80/10/10 & 12 & Regression & MAE\\
    \texttt{ogbl-molhiv} & 41,127 &  25.5 & 27.5 & 80/10/10 & 1 & Classification & ROC-AUC\\
    \texttt{ogbl-molpcba} & 437,929 & 26.0 & 28.1 & 80/10/10 & 128 & Classification & AP\\
  \bottomrule
\end{tabular}
\end{minipage}
}
\end{center}
\vspace{-10pt}
\end{table*}
\captionsetup[table]{font=Large}

\subsection{Models}
\textbf{QM9.}~We use 1-GNN, 1-2-GNN, 1-3-GNN, and 1-2-3-GNN from \citep{morris2019weisfeiler} as both the baselines and the base GNNs of NGNN. Among them, 1-GNN is a standard message passing GNN with 1-WL power. 1-2-GNN is a GNN mimicking 2-WL, where message passing happens among 2-tuples of nodes. 1-3-GNN and 1-2-3-GNN mimic 3-WL, where message passing happens among 3-tuples of nodes. 1-2-GNN and 1-3-GNN use features computed by 1-GNN as initial node features, and 1-2-3-GNN uses the concatenated features from 1-2-GNN and 1-3-GNN. We additionally include numbers provided by \citep{wu2018moleculenet} and Deep LRP~\citep{chen2020can} as baselines. Note that we omit more recent methods~\citep{anderson2019cormorant,klicpera2020directional,qiao2020orbnet} using advanced physical representations calculated from angles, atom coordinates, and quantum mechanics, which may obscure the comparison of models' pure graph representation power. 
For NGNN, we uniformly use height-3 rooted subgraphs. For a fair comparison, the base GNNs in NGNN use exactly the same hyperparameters as when they are used alone, except for 1-GNN where we increase the number of message passing layers from 3 to 5 to make the number of layers larger than the subgraph height, similar to \citep{zeng2020deep}. For subgraph pooling and graph pooling layers, we uniformly use mean pooling. All other settings follow \citep{morris2019weisfeiler}.


\textbf{TU.}~We use four widely adopted GNNs as the baselines and the base GNNs of NGNN: GCN~\citep{kipf2016semi}, GraphSAGE~\citep{hamilton2017inductive}, GIN~\citep{xu2018powerful}, and GAT~\citep{velivckovic2017graph}. Since TU datasets suffer from inconsistent evaluation standards~\citep{errica2019fair}, we uniformly use the 10-fold cross validation framework provided by PyTorch Geomtric~\citep{Fey/Lenssen/2019} for all the models to ensure a fair comparison. For GNNs, we search the number of message passing layers in $\{2,3,4,5\}$. For NGNNs, we similarly search the subgraph height $h$ in $\{2,3,4,5\}$, so that both NGNNs and GNNs can have equal-depth local receptive fields. For NGNNs, we always use $h+1$ message passing layers instead of searching it together with $h$, because that will make NGNNs have more hyperparameters to tune. All models have 32 hidden dimensions, and are trained for 100 epochs with a batch size of 128. For each fold, we record the test accuracy with the hyperparameters chosen based on the best validation performance of this fold. Finally, we report the average test accuracy across all the 10 folds.


\textbf{OGB.}~We use GNNs achieving top places on the OGB graph classification leaderboard\footnote{\url{https://ogb.stanford.edu/docs/leader_graphprop/}} (at the time of submission) as the baselines, including GCN~\citep{kipf2016semi}, GIN~\citep{xu2018powerful}, DeeperGCN~\citep{li2020deepergcn}, Deep LRP~\citep{chen2020can}, PNA~\citep{corso2020principal}, DGN~\citep{beaini2020directional}, GINE~\citep{brossard2020graph}, and PHC-GNN~\citep{le2021parameterized}. 
Note that those high-order GNNs~\citep{morris2019weisfeiler,maron2019provably,chen2019equivalence,morris2020weisfeiler} are not included here, because despite being theoretically more discriminative, these GNNs are \textbf{not} among the GNNs with the best empirical performance on modern large-scale graph benchmarks, and their $\mathcal{O}(n^3)$ complexity also raises a scalability issue.
For NGNN, we use GIN as the base GNN (although GIN is not among the strongest baselines here). Some baselines additionally use the virtual node technique~\citep{gilmer2017neural,li2015gated,ishiguro2019graph}, which are marked by ``*''. For NGNN, we search the subgraph height $h$ in $\{3,4,5\}$, and the number of layers in $\{4,5,6\}$. We train the NGNN models for 100 and 150 epochs for \texttt{ogbg-molhiv} and \texttt{ogbg-molpcba}, respectively, and report the validation and test scores at the best validation epoch. We also find that our models are subject to high performance variance across epochs, likely due to the increased expressiveness. Thus, we save a model checkpoint every 10 epochs, and additionally report the ensemble performance by averaging the predictions from all checkpoints. 
The final hyperparameter choices and more details about the experimental settings are included in Appendix~\ref{appendix:hyper}.
All results are averaged over 10 independent runs. 

In the following, we uniformly use ``Nested GNN'' to denote an NGNN model using ``GNN'' as the base GNN. For example, Nested GIN denotes an NGNN model using GIN~\citep{xu2018powerful} as the base GNN.
For the NGNN models in QM9, TU and OGB datasets, we augment the initial features of a node with Distance Encoding (DE)~\citep{li2020distance}, which uses the (generalized) distance between a node and the root as its additional feature, due to DE's successful applications in link-level tasks~\citep{zhang2018link,Zhang2020Inductive}. Note that such feature augmentation is not applicable to the baseline models as discussed in Section~\ref{section:ngnn}. An ablation study on the effects of the DE features is included in Appendix~\ref{appendix:de}.


\subsection{Results and discussion}

To answer \textbf{Q1}, we first run a simulation to test NGNN's power for discriminating $r$-regular graphs. The results are presented in Appendix~\ref{appendix:simu}. They match almost perfectly with Theorem~\ref{thm:power}, demonstrating that a practical NGNN can fulfil its theoretical power for discriminating $r$-regular graphs.

\begin{wraptable}[8]{L}{0.3\textwidth}
\large
\setlength{\tabcolsep}{13pt}
\vspace{-10pt}
\resizebox{0.29\textwidth}{!}{
\begin{minipage}[t]{0.47\textwidth}
\caption{Results (\%) on EXP.}
  \begin{tabular}{lc}
    \toprule
    \textbf{Method} & \textbf{Test Accuracy} \\
    \midrule
\textbf{GCN-RNI}~\citep{abboud2020surprising} & 98.0$\pm$1.85\\
    \textbf{PPGN}~\citep{maron2019provably} & 50.0$\pm$0.00\\
    \textbf{1-2-3-GNN}~\citep{morris2019weisfeiler} & 50.0$\pm$0.00\\
    \textbf{3-GCN}~\citep{abboud2020surprising} & 99.7$\pm$0.004\\
    \midrule
    \textbf{Nested GIN} & \textbf{99.9}$\pm$0.26 \\
  \bottomrule
\end{tabular}
\label{table:exp}
\end{minipage}
}
\end{wraptable}

We also test NGNN's expressive power using the EXP dataset provided by \citep{abboud2020surprising}, which contains 600 carefully constructed 1-WL indistinguishable but non-isomorphic graph pairs. Each pair of graphs have different labels, thus a standard message passing GNN cannot predict them both correctly, resulting in an expected classification accuracy of only 50\%. We exactly follow the experimental settings and copy the baseline results in \citep{abboud2020surprising}. In Table~\ref{table:exp}, our Nested GIN model achieves a 99.9\% classification accuracy, which outperforms all the baselines and distinguishes almost all the 1-WL indistinguishable graph pairs. 
These results verified that NGNN's expressive power is indeed beyond 1-WL and message passing GNNs.

To answer \textbf{Q2}, we adopt the QM9 and TU datasets. We show the QM9 results in Table~\ref{table:qm9}. If the Nested version of a base GNN achieves a better result than the base GNN itself, we color that cell with light green. As we can see, NGNN brings performance gains to all base GNNs on most targets, sometimes by large margins. We also show the results on TU in Table~\ref{table:tu}. NGNNs also show improvement over their base GNNs in most cases. These results indicate that NGNN is a general framework for improving a GNN's power. We further compute the maximum reduction of MAE for QM9 and maximum improvement of accuracy for TU before and after applying NGNN. NGNN reduces the MAE by up to 7.9 times for QM9, and increases the accuracy by up to 14.3\% for TU. These results answer \textbf{Q2}, indicating that NGNN can bring steady and significant improvement to base GNNs.   

\begin{table*}[t]
\setlength{\tabcolsep}{3.7pt}
\vspace{-10pt}
\begin{center}
  \resizebox{1\textwidth}{!}{
  \begin{minipage}[t]{1.72\textwidth}
   \caption{MAE results on QM9 (smaller the better). A colored cell means NGNN is better than the base GNN.}
   \vspace{-5pt}
   \label{table:qm9}
  \begin{tabular}{lccc|cccc|cccc|c}
    \toprule
    \multirow{2}{*}{\textbf{Target}}&\multicolumn{11}{c}{\textbf{Method} (Ne. for Nested)}\\
    \cmidrule{2-13}
    &\textbf{DTNN}&\textbf{MPNN}&\textbf{Deep LRP}&\textbf{1-GNN}&\textbf{1-2-GNN}&\textbf{1-3-GNN}&\textbf{1-2-3-GNN}&\textbf{Ne. 1-GNN}&\textbf{Ne. 1-2-GNN}&\textbf{Ne. 1-3-GNN}&\textbf{Ne. 1-2-3-GNN}&\textbf{Max. reduction}\\
    \midrule
    $\mu$ & \textbf{0.244} &  0.358 & 0.364 & 0.493 & 0.493 & 0.473 & 0.476 & \cellcolor[HTML]{E7FFE2}0.428 & \cellcolor[HTML]{E7FFE2}0.437 & \cellcolor[HTML]{E7FFE2}0.436 &\cellcolor[HTML]{E7FFE2} 0.433 & 1.2$\times$\\
    $\alpha$ & 0.95 &  0.89 & 0.298 & 0.78 & 0.27 & 0.46 & 0.27 & \cellcolor[HTML]{E7FFE2}0.29 & 0.278 & \cellcolor[HTML]{E7FFE2}\textbf{0.261} & \cellcolor[HTML]{E7FFE2}0.265 & 2.7$\times$\\
    $\varepsilon_{\text{HOMO}}$ & 0.00388 &  0.00541 & \textbf{0.00254} & 0.00321 & 0.00331 & 0.00328 & 0.00337 & \cellcolor[HTML]{E7FFE2}0.00265 & \cellcolor[HTML]{E7FFE2}0.00275 & \cellcolor[HTML]{E7FFE2}0.00265 & \cellcolor[HTML]{E7FFE2}0.00279 & 1.2$\times$\\
    $\varepsilon_{\text{LUMO}}$ & 0.00512 &  0.00623 & 0.00277 & 0.00355 & 0.00350 & 0.00354 & 0.00351 & \cellcolor[HTML]{E7FFE2}0.00297 &\cellcolor[HTML]{E7FFE2} 0.00271 &\cellcolor[HTML]{E7FFE2} \textbf{0.00269} & \cellcolor[HTML]{E7FFE2}0.00276 & 1.3$\times$\\
    $\Delta \varepsilon$ & 0.0112 &  0.0066 & \textbf{0.00353} & 0.0049 & 0.0047 & 0.0046 & 0.0048 & \cellcolor[HTML]{E7FFE2}0.0038 &\cellcolor[HTML]{E7FFE2} 0.0039 &\cellcolor[HTML]{E7FFE2} 0.0039 &\cellcolor[HTML]{E7FFE2} 0.0039 & 1.8$\times$\\
    $\langle R^2 \rangle$ & \textbf{17.0} &  28.5 & 19.3 & 34.1 & 21.5 & 25.8 & 22.9 & \cellcolor[HTML]{E7FFE2}20.5 & \cellcolor[HTML]{E7FFE2}20.4 & \cellcolor[HTML]{E7FFE2}20.2 & \cellcolor[HTML]{E7FFE2}20.1 & 1.7$\times$\\
    ZPVE & 0.00172 &  0.00216 & 0.00055 & 0.00124 & 0.00018 & 0.00064 & 0.00019 & \cellcolor[HTML]{E7FFE2}0.00020 &\cellcolor[HTML]{E7FFE2} 0.00017 & \cellcolor[HTML]{E7FFE2}0.00017 &\cellcolor[HTML]{E7FFE2} \textbf{0.00015} & 6.2$\times$\\
    $U_0$ & 2.43 &  2.05 & 0.413 & 2.32 & \textbf{0.0357} & 0.6855 & 0.0427 & \cellcolor[HTML]{E7FFE2}0.295 & 0.252 & \cellcolor[HTML]{E7FFE2}0.291 & 0.205 & 7.9$\times$\\
    $U$ & 2.43 &  2.00 & 0.413 & 2.08 & \textbf{0.107} & 0.686 & 0.111 &\cellcolor[HTML]{E7FFE2} 0.361 & 0.265 & \cellcolor[HTML]{E7FFE2}0.278 & 0.200 & 5.8$\times$\\
    $H$ & 2.43 & 2.02 & 0.413 & 2.23 & 0.070 & 0.794 & \textbf{0.0419} & \cellcolor[HTML]{E7FFE2}0.305 & 0.241 & \cellcolor[HTML]{E7FFE2}0.267 & 0.249 & 7.3$\times$\\
    $G$ & 2.43 & 2.02 & 0.413 & 1.94 & 0.140 & 0.587 & \textbf{0.0469} & \cellcolor[HTML]{E7FFE2}0.489 & 0.272 & \cellcolor[HTML]{E7FFE2}0.287 & 0.253 & 4.0$\times$\\
    $C_v$ & 0.27 &  0.42 & 0.129 & 0.27 & 0.0989 & 0.158 & 0.0944 & \cellcolor[HTML]{E7FFE2}0.174 & \cellcolor[HTML]{E7FFE2}0.0891 & \cellcolor[HTML]{E7FFE2}0.0879 & \cellcolor[HTML]{E7FFE2}\textbf{0.0811} & 1.8$\times$\\ 
  \bottomrule
\end{tabular}
\end{minipage}
}
\end{center}
\end{table*}

\begin{table}[t]
\vspace{-10pt}
\hspace{-5pt}
\resizebox{0.53\textwidth}{!}{
\begin{minipage}[t]{0.9\linewidth}
\caption{Accuracy results (\%) on TU datasets.}
\label{table:tu}
  \begin{tabular}{lcccccc}
    \toprule
    & D\&D  & MUTAG & PROTEINS & PTC\_MR & ENZYMES \\
    \#Graphs & 1178 & 188 & 1113 & 344 & 600\\
    Avg. \#nodes & 284.32 & 17.93 & 39.06 &  14.29 & 32.63\\
    \midrule
    \textbf{GCN} & 71.6{\small$\pm$2.8} &  73.4{\small$\pm$10.8} & 71.7{\small$\pm$4.7} & 56.4{\small$\pm$7.1} & 27.3{\small$\pm$5.5} \\
    \textbf{GraphSAGE} & 71.6{\small$\pm$3.0} &  74.0{\small$\pm$8.8} & 71.2{\small$\pm$5.2} & 57.0{\small$\pm$5.5} & 30.7{\small$\pm$6.3} \\
    \textbf{GIN} & 70.5{\small$\pm$3.9} &  84.5{\small$\pm$8.9} & 70.6{\small$\pm$4.3} & 51.2{\small$\pm$9.2} & \textbf{38.3}{\small$\pm$6.4} \\ 
    \textbf{GAT} & 71.0{\small$\pm$4.4} &  73.9{\small$\pm$10.7} & 72.0{\small$\pm$3.3} & 57.0{\small$\pm$7.3} & 30.2{\small$\pm$4.2} \\
    \midrule
    \textbf{Nested GCN} & \cellcolor[HTML]{E7FFE2}76.3{\small$\pm$3.8} &  \cellcolor[HTML]{E7FFE2}82.9{\small$\pm$11.1} & \cellcolor[HTML]{E7FFE2}73.3{\small$\pm$4.0} & \cellcolor[HTML]{E7FFE2}\textbf{57.3}{\small$\pm$7.7} & \cellcolor[HTML]{E7FFE2}31.2{\small$\pm$6.7} \\
    \textbf{Nested GraphSAGE} & \cellcolor[HTML]{E7FFE2}77.4{\small$\pm$4.2} &  \cellcolor[HTML]{E7FFE2}83.9{\small$\pm$10.7} & \cellcolor[HTML]{E7FFE2}\textbf{74.2}{\small$\pm$3.7} & \cellcolor[HTML]{E7FFE2}57.0{\small$\pm$5.9} & \cellcolor[HTML]{E7FFE2}30.7{\small$\pm$6.3} \\
    \textbf{Nested GIN} & \cellcolor[HTML]{E7FFE2}\textbf{77.8}{\small$\pm$3.9} &  \cellcolor[HTML]{E7FFE2}\textbf{87.9}{\small$\pm$8.2} & \cellcolor[HTML]{E7FFE2}73.9{\small$\pm$5.1} & \cellcolor[HTML]{E7FFE2}54.1{\small$\pm$7.7} & 29.0{\small$\pm$8.0} \\
    \textbf{Nested GAT} & \cellcolor[HTML]{E7FFE2}76.0{\small$\pm$4.4} &  \cellcolor[HTML]{E7FFE2}81.9{\small$\pm$10.2} & \cellcolor[HTML]{E7FFE2}73.7{\small$\pm$4.8} & 56.7{\small$\pm$8.1} & 29.5{\small$\pm$5.7} \\
    \midrule
    \textbf{Max. improvement} & 10.4\% & 13.4\% & 4.7\% & 5.7\% & 14.3\%\\
  \bottomrule
\end{tabular}
\end{minipage}
}
\hspace{5pt}
\resizebox{0.47\textwidth}{!}{
\begin{minipage}[t]{0.845\linewidth}
\caption{Results (\%) on OGB datasets (* virtual node).}
\label{table:ogb}
  \begin{tabular}{lcccc}
    \toprule
    &\multicolumn{2}{c}{\texttt{ogbg-molhiv} (AUC)} & \multicolumn{2}{c}{\texttt{ogbg-molpcba} (AP)} \\
    \cmidrule(r{0.5em}){2-3} \cmidrule(l{0.5em}){4-5}
    \textbf{Method}&Validation&\textbf{Test}&Validation&\textbf{Test}\\
    \midrule
    \textbf{CCN*} & 83.84{\small$\pm$0.91} &  75.99{\small$\pm$1.19} & 24.95{\small$\pm$0.42} & 24.24{\small$\pm$0.34} \\
    \textbf{GIN*} & 84.79{\small$\pm$0.68} &  77.07{\small$\pm$1.49} & 27.98{\small$\pm$0.25} & 27.03{\small$\pm$0.23} \\
    \textbf{Deep LRP} & 82.09{\small$\pm$1.16} &  77.19{\small$\pm$1.40} &  -- & --\\
    \textbf{DeeperGCN*} & -- &  -- & 29.20{\small$\pm$0.25} & 27.81{\small$\pm$0.38} \\
    \textbf{HIMP} & -- &  78.80{\small$\pm$0.82} & -- & -- \\
    \textbf{PNA} & 85.19{\small$\pm$0.99} &  79.05{\small$\pm$1.32} & -- & -- \\
    \textbf{DGN} &  84.70{\small$\pm$0.47} & 79.70{\small$\pm$0.97} -- &  -- & \\
    \textbf{GINE*} & -- &  -- & 30.65{\small$\pm$0.30} & 29.17{\small$\pm$0.15} \\
    \textbf{PHC-GNN} & 82.17{\small$\pm$0.89} &  79.34{\small$\pm$1.16} & 30.68{\small$\pm$0.25} & 29.47{\small$\pm$0.26} \\
    \midrule
    \textbf{Nested GIN*} & 83.17{\small$\pm$1.99} &  \cellcolor[HTML]{E7FFE2}78.34{\small$\pm$1.86} & 29.15{\small$\pm$0.35} &  \cellcolor[HTML]{E7FFE2}28.32{\small$\pm$0.41}\\
    \textbf{Nested GIN* (ens)} & 80.80{\small$\pm$2.78} & \cellcolor[HTML]{E7FFE2}\textbf{79.86}{\small$\pm$1.05} & 30.59{\small$\pm$0.56} & \cellcolor[HTML]{E7FFE2}\textbf{30.07}{\small$\pm$0.37}  \\
  \bottomrule
\end{tabular}
\end{minipage}
}
\end{table}

To answer \textbf{Q3}, we compare Nested GIN with leading methods on the OGB leaderboard. The results are shown in Table~\ref{table:ogb}. Nested GIN achieves highly competitive performance with these leading GNN models, albeit using a relatively weak base GNN (GIN). Compared to GIN alone, Nested GIN shows clear performance gains. It achieves test scores up to 79.86 and 30.07 on \texttt{ogbg-molhiv} and \texttt{ogbg-molpcba}, respectively, which outperform all the baselines. In particular, for the challenging \texttt{ogbg-molpcba}, our Nested GIN can achieve 30.07 and 28.32 test AP with and without ensemble, respectively, outperforming the plain GIN model (with 27.03 test AP) significantly. These results demonstrate the great empirical performance and potential of NGNN even compared to heavily tuned open leaderboard models, despite using only GIN as the base GNN. 

To answer \textbf{Q4}, we report the training time per epoch for GIN and Nested GIN on OGB datasets. On \texttt{ogbg-molhiv}, GIN takes 54s per epoch, while Nested GIN takes 183s. On \texttt{ogbg-molpcba}, GIN takes 10min per epoch, while Nested GIN takes 20min. This verifies that NGNN has comparable time complexity with message passing GNNs. The extra complexity comes from independently learning better node representations from rooted subgraphs, which is a trade-off for the higher expressivity.

In summary, our experiments have firmly shown that NGNN is a theoretically sound method which brings consistent gains to its base GNNs in a plug-and-play way. Furthermore, NGNN still maintains a controllable time complexity compared to other more powerful GNNs. 

Finally, we point out one memory limitation of the current NGNN implementation. Currently, NGNN does not scale to graph datasets with a large average node number (such as REDDIT-BINARY) or datasets with a large average node degree (such as \texttt{ogbg-ppa}) due to copying a rooted subgraph for each node to the GPU memory. Reducing batch size or subgraph height helps, but at the same time leads to performance degradation. One may wonder why materializing all the subgraphs into GPU memory is necessary. The reason is that we want to batch-process all the subgraphs simultaneously. Otherwise, we have to sequentially extract subgraphs on the fly, which results in a much higher latency. We leave the exploration of memory efficient NGNN to the future work. 

\section{Conclusions}
We have proposed Nested Graph Neural Network (NGNN), a general framework for improving GNN's representation power. NGNN learns node representations encoding rooted subgraphs instead of rooted subtrees. Theoretically, we prove NGNN can discriminate almost all $r$-regular graphs where 1-WL always fails. Empirically, NGNN consistently improves the performance of various base GNNs across different datasets without incurring the $\mathcal{O}(n^3)$ complexity like other more powerful GNNs.

\section*{Acknowledge}
The authors greatly thank the actionable suggestions from the reviewers to improve the manuscript. Li is partly supported by the 2021 JP Morgan Faculty Award and the National Science Foundation (NSF) award HDR-2117997.


\bibliography{paper}

\begin{thebibliography}{73}
\providecommand{\natexlab}[1]{#1}
\providecommand{\url}[1]{\texttt{#1}}
\expandafter\ifx\csname urlstyle\endcsname\relax
  \providecommand{\doi}[1]{doi: #1}\else
  \providecommand{\doi}{doi: \begingroup \urlstyle{rm}\Url}\fi

\bibitem[Perozzi et~al.(2014)Perozzi, Al-Rfou, and Skiena]{perozzi2014deepwalk}
Bryan Perozzi, Rami Al-Rfou, and Steven Skiena.
\newblock Deepwalk: Online learning of social representations.
\newblock In \emph{Proceedings of the 20th ACM SIGKDD international conference
  on Knowledge discovery and data mining}, pages 701--710. ACM, 2014.

\bibitem[Haussler(1999)]{haussler1999convolution}
David Haussler.
\newblock Convolution kernels on discrete structures.
\newblock Technical report, Citeseer, 1999.

\bibitem[Shervashidze et~al.(2009)Shervashidze, Vishwanathan, Petri, Mehlhorn,
  and Borgwardt]{shervashidze2009efficient}
Nino Shervashidze, SVN Vishwanathan, Tobias Petri, Kurt Mehlhorn, and Karsten~M
  Borgwardt.
\newblock Efficient graphlet kernels for large graph comparison.
\newblock In \emph{AISTATS}, volume~5, pages 488--495, 2009.

\bibitem[Kondor et~al.(2009)Kondor, Shervashidze, and
  Borgwardt]{kondor2009graphlet}
Risi Kondor, Nino Shervashidze, and Karsten~M Borgwardt.
\newblock The graphlet spectrum.
\newblock In \emph{Proceedings of the 26th Annual International Conference on
  Machine Learning}, pages 529--536. ACM, 2009.

\bibitem[Borgwardt and Kriegel(2005)]{borgwardt2005shortest}
Karsten~M Borgwardt and Hans-Peter Kriegel.
\newblock Shortest-path kernels on graphs.
\newblock In \emph{5th IEEE International Conference on Data Mining}, pages
  8--pp. IEEE, 2005.

\bibitem[Neumann et~al.(2016)Neumann, Garnett, Bauckhage, and
  Kersting]{neumann2016propagation}
Marion Neumann, Roman Garnett, Christian Bauckhage, and Kristian Kersting.
\newblock Propagation kernels: efficient graph kernels from propagated
  information.
\newblock \emph{Machine Learning}, 102\penalty0 (2):\penalty0 209--245, 2016.

\bibitem[Shervashidze et~al.(2011)Shervashidze, Schweitzer, Leeuwen, Mehlhorn,
  and Borgwardt]{shervashidze2011weisfeiler}
Nino Shervashidze, Pascal Schweitzer, Erik Jan~van Leeuwen, Kurt Mehlhorn, and
  Karsten~M Borgwardt.
\newblock Weisfeiler-lehman graph kernels.
\newblock \emph{Journal of Machine Learning Research}, 12\penalty0
  (Sep):\penalty0 2539--2561, 2011.

\bibitem[Scarselli et~al.(2009)Scarselli, Gori, Tsoi, Hagenbuchner, and
  Monfardini]{scarselli2009graph}
Franco Scarselli, Marco Gori, Ah~Chung Tsoi, Markus Hagenbuchner, and Gabriele
  Monfardini.
\newblock The graph neural network model.
\newblock \emph{IEEE Transactions on Neural Networks}, 20\penalty0
  (1):\penalty0 61--80, 2009.

\bibitem[Bruna et~al.(2013)Bruna, Zaremba, Szlam, and LeCun]{bruna2013spectral}
Joan Bruna, Wojciech Zaremba, Arthur Szlam, and Yann LeCun.
\newblock Spectral networks and locally connected networks on graphs.
\newblock \emph{arXiv preprint arXiv:1312.6203}, 2013.

\bibitem[Duvenaud et~al.(2015)Duvenaud, Maclaurin, Iparraguirre, Bombarell,
  Hirzel, Aspuru-Guzik, and Adams]{duvenaud2015convolutional}
David~K Duvenaud, Dougal Maclaurin, Jorge Iparraguirre, Rafael Bombarell,
  Timothy Hirzel, Al{\'a}n Aspuru-Guzik, and Ryan~P Adams.
\newblock Convolutional networks on graphs for learning molecular fingerprints.
\newblock In \emph{Advances in neural information processing systems}, pages
  2224--2232, 2015.

\bibitem[Li et~al.(2015)Li, Tarlow, Brockschmidt, and Zemel]{li2015gated}
Yujia Li, Daniel Tarlow, Marc Brockschmidt, and Richard Zemel.
\newblock Gated graph sequence neural networks.
\newblock \emph{arXiv preprint arXiv:1511.05493}, 2015.

\bibitem[Kipf and Welling(2016)]{kipf2016semi}
Thomas~N Kipf and Max Welling.
\newblock Semi-supervised classification with graph convolutional networks.
\newblock \emph{arXiv preprint arXiv:1609.02907}, 2016.

\bibitem[Defferrard et~al.(2016)Defferrard, Bresson, and
  Vandergheynst]{defferrard2016convolutional}
Micha{\"e}l Defferrard, Xavier Bresson, and Pierre Vandergheynst.
\newblock Convolutional neural networks on graphs with fast localized spectral
  filtering.
\newblock In \emph{Advances in Neural Information Processing Systems}, pages
  3837--3845, 2016.

\bibitem[Dai et~al.(2016)Dai, Dai, and Song]{dai2016discriminative}
Hanjun Dai, Bo~Dai, and Le~Song.
\newblock Discriminative embeddings of latent variable models for structured
  data.
\newblock In \emph{Proceedings of The 33rd International Conference on Machine
  Learning}, pages 2702--2711, 2016.

\bibitem[Veli{\v{c}}kovi{\'c} et~al.(2017)Veli{\v{c}}kovi{\'c}, Cucurull,
  Casanova, Romero, Lio, and Bengio]{velivckovic2017graph}
Petar Veli{\v{c}}kovi{\'c}, Guillem Cucurull, Arantxa Casanova, Adriana Romero,
  Pietro Lio, and Yoshua Bengio.
\newblock Graph attention networks.
\newblock \emph{arXiv preprint arXiv:1710.10903}, 2017.

\bibitem[Zhang et~al.(2018)Zhang, Cui, Neumann, and Chen]{zhang2018end}
Muhan Zhang, Zhicheng Cui, Marion Neumann, and Yixin Chen.
\newblock An end-to-end deep learning architecture for graph classification.
\newblock In \emph{AAAI}, pages 4438--4445, 2018.

\bibitem[Ying et~al.(2018)Ying, You, Morris, Ren, Hamilton, and
  Leskovec]{ying2018hierarchical}
Zhitao Ying, Jiaxuan You, Christopher Morris, Xiang Ren, Will Hamilton, and
  Jure Leskovec.
\newblock Hierarchical graph representation learning with differentiable
  pooling.
\newblock In \emph{Advances in Neural Information Processing Systems}, pages
  4800--4810, 2018.

\bibitem[Gilmer et~al.(2017)Gilmer, Schoenholz, Riley, Vinyals, and
  Dahl]{gilmer2017neural}
Justin Gilmer, Samuel~S Schoenholz, Patrick~F Riley, Oriol Vinyals, and
  George~E Dahl.
\newblock Neural message passing for quantum chemistry.
\newblock In \emph{Proceedings of the 34th International Conference on Machine
  Learning-Volume 70}, pages 1263--1272. JMLR. org, 2017.

\bibitem[Morris et~al.(2019)Morris, Ritzert, Fey, Hamilton, Lenssen, Rattan,
  and Grohe]{morris2019weisfeiler}
Christopher Morris, Martin Ritzert, Matthias Fey, William~L Hamilton, Jan~Eric
  Lenssen, Gaurav Rattan, and Martin Grohe.
\newblock Weisfeiler and leman go neural: Higher-order graph neural networks.
\newblock In \emph{Proceedings of the AAAI Conference on Artificial
  Intelligence}, volume~33, pages 4602--4609, 2019.

\bibitem[Maron et~al.(2019{\natexlab{a}})Maron, Ben-Hamu, Serviansky, and
  Lipman]{maron2019provably}
Haggai Maron, Heli Ben-Hamu, Hadar Serviansky, and Yaron Lipman.
\newblock Provably powerful graph networks.
\newblock In \emph{Advances in Neural Information Processing Systems}, pages
  2156--2167, 2019{\natexlab{a}}.

\bibitem[Chen et~al.(2019)Chen, Villar, Chen, and Bruna]{chen2019equivalence}
Zhengdao Chen, Soledad Villar, Lei Chen, and Joan Bruna.
\newblock On the equivalence between graph isomorphism testing and function
  approximation with gnns.
\newblock In \emph{Advances in Neural Information Processing Systems}, pages
  15894--15902, 2019.

\bibitem[Weisfeiler and Lehman(1968)]{weisfeiler1968reduction}
Boris Weisfeiler and AA~Lehman.
\newblock A reduction of a graph to a canonical form and an algebra arising
  during this reduction.
\newblock \emph{Nauchno-Technicheskaya Informatsia}, 2\penalty0 (9):\penalty0
  12--16, 1968.

\bibitem[Vishwanathan et~al.(2010)Vishwanathan, Schraudolph, Kondor, and
  Borgwardt]{vishwanathan2010graph}
S~Vichy~N Vishwanathan, Nicol~N Schraudolph, Risi Kondor, and Karsten~M
  Borgwardt.
\newblock Graph kernels.
\newblock \emph{Journal of Machine Learning Research}, 11\penalty0
  (Apr):\penalty0 1201--1242, 2010.

\bibitem[Li et~al.(2020{\natexlab{a}})Li, Wang, Wang, and
  Leskovec]{li2020distance}
Pan Li, Yanbang Wang, Hongwei Wang, and Jure Leskovec.
\newblock Distance encoding--design provably more powerful gnns for structural
  representation learning.
\newblock \emph{arXiv preprint arXiv:2009.00142}, 2020{\natexlab{a}}.

\bibitem[Morris et~al.(2020)Morris, Rattan, and Mutzel]{morris2020weisfeiler}
Christopher Morris, Gaurav Rattan, and Petra Mutzel.
\newblock Weisfeiler and leman go sparse: Towards scalable higher-order graph
  embeddings.
\newblock 2020.

\bibitem[Zhang and Chen(2017)]{zhang2017weisfeiler}
Muhan Zhang and Yixin Chen.
\newblock Weisfeiler-lehman neural machine for link prediction.
\newblock In \emph{Proceedings of the 23rd ACM SIGKDD International Conference
  on Knowledge Discovery and Data Mining}, pages 575--583. ACM, 2017.

\bibitem[Xu et~al.(2018)Xu, Hu, Leskovec, and Jegelka]{xu2018powerful}
Keyulu Xu, Weihua Hu, Jure Leskovec, and Stefanie Jegelka.
\newblock How powerful are graph neural networks?
\newblock \emph{arXiv preprint arXiv:1810.00826}, 2018.

\bibitem[Zaheer et~al.(2017)Zaheer, Kottur, Ravanbakhsh, Poczos, Salakhutdinov,
  and Smola]{zaheer2017deep}
Manzil Zaheer, Satwik Kottur, Siamak Ravanbakhsh, Barnabas Poczos, Russ~R
  Salakhutdinov, and Alexander~J Smola.
\newblock Deep sets.
\newblock In \emph{Advances in Neural Information Processing Systems}, pages
  3391--3401, 2017.

\bibitem[Brouwer and Haemers(2012)]{brouwer2012strongly}
Andries~E Brouwer and Willem~H Haemers.
\newblock Strongly regular graphs.
\newblock In \emph{Spectra of Graphs}, pages 115--149. Springer, 2012.

\bibitem[Murphy et~al.(2019)Murphy, Srinivasan, Rao, and
  Ribeiro]{murphy2019relational}
Ryan Murphy, Balasubramaniam Srinivasan, Vinayak Rao, and Bruno Ribeiro.
\newblock Relational pooling for graph representations.
\newblock In \emph{International Conference on Machine Learning}, pages
  4663--4673. PMLR, 2019.

\bibitem[Bouritsas et~al.(2020)Bouritsas, Frasca, Zafeiriou, and
  Bronstein]{bouritsas2020improving}
Giorgos Bouritsas, Fabrizio Frasca, Stefanos Zafeiriou, and Michael~M
  Bronstein.
\newblock Improving graph neural network expressivity via subgraph isomorphism
  counting.
\newblock \emph{arXiv preprint arXiv:2006.09252}, 2020.

\bibitem[You et~al.(2021)You, Gomes-Selman, Ying, and
  Leskovec]{you2021identity}
Jiaxuan You, Jonathan Gomes-Selman, Rex Ying, and Jure Leskovec.
\newblock Identity-aware graph neural networks.
\newblock \emph{arXiv preprint arXiv:2101.10320}, 2021.

\bibitem[Beaini et~al.(2020)Beaini, Passaro, L{\'e}tourneau, Hamilton, Corso,
  and Li{\`o}]{beaini2020directional}
Dominique Beaini, Saro Passaro, Vincent L{\'e}tourneau, William~L Hamilton,
  Gabriele Corso, and Pietro Li{\`o}.
\newblock Directional graph networks.
\newblock \emph{arXiv preprint arXiv:2010.02863}, 2020.

\bibitem[Maron et~al.(2018)Maron, Ben-Hamu, Shamir, and
  Lipman]{maron2018invariant}
Haggai Maron, Heli Ben-Hamu, Nadav Shamir, and Yaron Lipman.
\newblock Invariant and equivariant graph networks.
\newblock \emph{arXiv preprint arXiv:1812.09902}, 2018.

\bibitem[Maron et~al.(2019{\natexlab{b}})Maron, Fetaya, Segol, and
  Lipman]{maron2019universality}
Haggai Maron, Ethan Fetaya, Nimrod Segol, and Yaron Lipman.
\newblock On the universality of invariant networks.
\newblock In \emph{International conference on machine learning}, pages
  4363--4371. PMLR, 2019{\natexlab{b}}.

\bibitem[Keriven and Peyr{\'e}(2019)]{keriven2019universal}
Nicolas Keriven and Gabriel Peyr{\'e}.
\newblock Universal invariant and equivariant graph neural networks.
\newblock \emph{arXiv preprint arXiv:1905.04943}, 2019.

\bibitem[Azizian and Lelarge(2020)]{azizian2020characterizing}
Wa{\"\i}ss Azizian and Marc Lelarge.
\newblock Characterizing the expressive power of invariant and equivariant
  graph neural networks.
\newblock \emph{arXiv preprint arXiv:2006.15646}, 2020.

\bibitem[Dasoulas et~al.(2019)Dasoulas, Santos, Scaman, and
  Virmaux]{dasoulas2019coloring}
George Dasoulas, Ludovic~Dos Santos, Kevin Scaman, and Aladin Virmaux.
\newblock Coloring graph neural networks for node disambiguation.
\newblock \emph{arXiv preprint arXiv:1912.06058}, 2019.

\bibitem[Chen et~al.(2020)Chen, Chen, Villar, and Bruna]{chen2020can}
Zhengdao Chen, Lei Chen, Soledad Villar, and Joan Bruna.
\newblock Can graph neural networks count substructures?
\newblock \emph{Advances in neural information processing systems}, 2020.

\bibitem[Loukas(2019)]{loukas2019graph}
Andreas Loukas.
\newblock What graph neural networks cannot learn: depth vs width.
\newblock \emph{arXiv preprint arXiv:1907.03199}, 2019.

\bibitem[Sato et~al.(2020)Sato, Yamada, and Kashima]{sato2020random}
Ryoma Sato, Makoto Yamada, and Hisashi Kashima.
\newblock Random features strengthen graph neural networks.
\newblock \emph{arXiv preprint arXiv:2002.03155}, 2020.

\bibitem[Abboud et~al.(2020)Abboud, Ceylan, Grohe, and
  Lukasiewicz]{abboud2020surprising}
Ralph Abboud, {\.I}smail~{\.I}lkan Ceylan, Martin Grohe, and Thomas
  Lukasiewicz.
\newblock The surprising power of graph neural networks with random node
  initialization.
\newblock \emph{arXiv preprint arXiv:2010.01179}, 2020.

\bibitem[Vignac et~al.(2020)Vignac, Loukas, and Frossard]{vignac2020building}
Cl{\'e}ment Vignac, Andreas Loukas, and Pascal Frossard.
\newblock Building powerful and equivariant graph neural networks with
  structural message-passing.
\newblock \emph{arXiv e-prints}, pages arXiv--2006, 2020.

\bibitem[Zhang and Chen(2018)]{zhang2018link}
Muhan Zhang and Yixin Chen.
\newblock Link prediction based on graph neural networks.
\newblock In \emph{Advances in Neural Information Processing Systems}, pages
  5165--5175, 2018.

\bibitem[Zhang et~al.(2020)Zhang, Li, Xia, Wang, and Jin]{zhang2020revisiting}
Muhan Zhang, Pan Li, Yinglong Xia, Kai Wang, and Long Jin.
\newblock Revisiting graph neural networks for link prediction.
\newblock \emph{arXiv preprint arXiv:2010.16103}, 2020.

\bibitem[Zhang and Chen(2020)]{Zhang2020Inductive}
Muhan Zhang and Yixin Chen.
\newblock Inductive matrix completion based on graph neural networks.
\newblock In \emph{International Conference on Learning Representations}, 2020.
\newblock URL \url{https://openreview.net/forum?id=ByxxgCEYDS}.

\bibitem[de~Haan et~al.(2020)de~Haan, Cohen, and Welling]{de2020natural}
Pim de~Haan, Taco Cohen, and Max Welling.
\newblock Natural graph networks.
\newblock \emph{arXiv preprint arXiv:2007.08349}, 2020.

\bibitem[Nikolentzos et~al.(2020)Nikolentzos, Dasoulas, and
  Vazirgiannis]{nikolentzos2020k}
Giannis Nikolentzos, George Dasoulas, and Michalis Vazirgiannis.
\newblock k-hop graph neural networks.
\newblock \emph{Neural Networks}, 130:\penalty0 195--205, 2020.

\bibitem[Abu-El-Haija et~al.(2019)Abu-El-Haija, Perozzi, Kapoor, Alipourfard,
  Lerman, Harutyunyan, Ver~Steeg, and Galstyan]{abu2019mixhop}
Sami Abu-El-Haija, Bryan Perozzi, Amol Kapoor, Nazanin Alipourfard, Kristina
  Lerman, Hrayr Harutyunyan, Greg Ver~Steeg, and Aram Galstyan.
\newblock Mixhop: Higher-order graph convolutional architectures via sparsified
  neighborhood mixing.
\newblock In \emph{international conference on machine learning}, pages 21--29.
  PMLR, 2019.

\bibitem[Huang and Zitnik(2020)]{huang2020graph}
Kexin Huang and Marinka Zitnik.
\newblock Graph meta learning via local subgraphs.
\newblock \emph{Advances in Neural Information Processing Systems}, 33, 2020.

\bibitem[Fey and Lenssen(2019{\natexlab{a}})]{fey2019fast}
Matthias Fey and Jan~Eric Lenssen.
\newblock Fast graph representation learning with pytorch geometric.
\newblock \emph{arXiv preprint arXiv:1903.02428}, 2019{\natexlab{a}}.

\bibitem[Ramakrishnan et~al.(2014)Ramakrishnan, Dral, Rupp, and
  Von~Lilienfeld]{ramakrishnan2014quantum}
Raghunathan Ramakrishnan, Pavlo~O Dral, Matthias Rupp, and O~Anatole
  Von~Lilienfeld.
\newblock Quantum chemistry structures and properties of 134 kilo molecules.
\newblock \emph{Scientific data}, 1\penalty0 (1):\penalty0 1--7, 2014.

\bibitem[Wu et~al.(2018)Wu, Ramsundar, Feinberg, Gomes, Geniesse, Pappu,
  Leswing, and Pande]{wu2018moleculenet}
Zhenqin Wu, Bharath Ramsundar, Evan~N Feinberg, Joseph Gomes, Caleb Geniesse,
  Aneesh~S Pappu, Karl Leswing, and Vijay Pande.
\newblock Moleculenet: a benchmark for molecular machine learning.
\newblock \emph{Chemical science}, 9\penalty0 (2):\penalty0 513--530, 2018.

\bibitem[Kersting et~al.(2016)Kersting, Kriege, Morris, Mutzel, and
  Neumann]{KKMMN2016}
Kristian Kersting, Nils~M. Kriege, Christopher Morris, Petra Mutzel, and Marion
  Neumann.
\newblock Benchmark data sets for graph kernels, 2016.
\newblock URL \url{http://graphkernels.cs.tu-dortmund.de}.

\bibitem[Dobson and Doig(2003)]{dobson2003distinguishing}
Paul~D Dobson and Andrew~J Doig.
\newblock Distinguishing enzyme structures from non-enzymes without alignments.
\newblock \emph{Journal of molecular biology}, 330\penalty0 (4):\penalty0
  771--783, 2003.

\bibitem[Debnath et~al.(1991)Debnath, Lopez, Debnath, Shusterman, and
  Hansch]{debnath1991structure}
Asim~Kumar Debnath, de~Compadre~RL Lopez, Gargi Debnath, Alan~J Shusterman, and
  Corwin Hansch.
\newblock Structure-activity relationship of mutagenic aromatic and
  heteroaromatic nitro compounds. correlation with molecular orbital energies
  and hydrophobicity.
\newblock \emph{Journal of medicinal chemistry}, 34\penalty0 (2):\penalty0
  786--797, 1991.

\bibitem[Toivonen et~al.(2003)Toivonen, Srinivasan, King, Kramer, and
  Helma]{toivonen2003statistical}
Hannu Toivonen, Ashwin Srinivasan, Ross~D King, Stefan Kramer, and Christoph
  Helma.
\newblock Statistical evaluation of the predictive toxicology challenge
  2000--2001.
\newblock \emph{Bioinformatics}, 19\penalty0 (10):\penalty0 1183--1193, 2003.

\bibitem[Schomburg et~al.(2004)Schomburg, Chang, Ebeling, Gremse, Heldt, Huhn,
  and Schomburg]{schomburg2004brenda}
Ida Schomburg, Antje Chang, Christian Ebeling, Marion Gremse, Christian Heldt,
  Gregor Huhn, and Dietmar Schomburg.
\newblock Brenda, the enzyme database: updates and major new developments.
\newblock \emph{Nucleic acids research}, 32\penalty0 (suppl\_1):\penalty0
  D431--D433, 2004.

\bibitem[Hu et~al.(2020)Hu, Fey, Zitnik, Dong, Ren, Liu, Catasta, and
  Leskovec]{hu2020open}
Weihua Hu, Matthias Fey, Marinka Zitnik, Yuxiao Dong, Hongyu Ren, Bowen Liu,
  Michele Catasta, and Jure Leskovec.
\newblock Open graph benchmark: Datasets for machine learning on graphs.
\newblock \emph{arXiv preprint arXiv:2005.00687}, 2020.

\bibitem[Anderson et~al.(2019)Anderson, Hy, and Kondor]{anderson2019cormorant}
Brandon Anderson, Truong-Son Hy, and Risi Kondor.
\newblock Cormorant: Covariant molecular neural networks.
\newblock \emph{arXiv preprint arXiv:1906.04015}, 2019.

\bibitem[Klicpera et~al.(2020)Klicpera, Gro{\ss}, and
  G{\"u}nnemann]{klicpera2020directional}
Johannes Klicpera, Janek Gro{\ss}, and Stephan G{\"u}nnemann.
\newblock Directional message passing for molecular graphs.
\newblock \emph{arXiv preprint arXiv:2003.03123}, 2020.

\bibitem[Qiao et~al.(2020)Qiao, Welborn, Anandkumar, Manby, and
  Miller~III]{qiao2020orbnet}
Zhuoran Qiao, Matthew Welborn, Animashree Anandkumar, Frederick~R Manby, and
  Thomas~F Miller~III.
\newblock Orbnet: Deep learning for quantum chemistry using symmetry-adapted
  atomic-orbital features.
\newblock \emph{The Journal of Chemical Physics}, 153\penalty0 (12):\penalty0
  124111, 2020.

\bibitem[Zeng et~al.(2020)Zeng, Zhang, Xia, Srivastava, Malevich, Kannan,
  Prasanna, Jin, and Chen]{zeng2020deep}
Hanqing Zeng, Muhan Zhang, Yinglong Xia, Ajitesh Srivastava, Andrey Malevich,
  Rajgopal Kannan, Viktor Prasanna, Long Jin, and Ren Chen.
\newblock Deep graph neural networks with shallow subgraph samplers.
\newblock \emph{arXiv preprint arXiv:2012.01380}, 2020.

\bibitem[Hamilton et~al.(2017)Hamilton, Ying, and
  Leskovec]{hamilton2017inductive}
Will Hamilton, Zhitao Ying, and Jure Leskovec.
\newblock Inductive representation learning on large graphs.
\newblock In \emph{Advances in Neural Information Processing Systems}, pages
  1025--1035, 2017.

\bibitem[Errica et~al.(2019)Errica, Podda, Bacciu, and Micheli]{errica2019fair}
Federico Errica, Marco Podda, Davide Bacciu, and Alessio Micheli.
\newblock A fair comparison of graph neural networks for graph classification.
\newblock \emph{arXiv preprint arXiv:1912.09893}, 2019.

\bibitem[Fey and Lenssen(2019{\natexlab{b}})]{Fey/Lenssen/2019}
Matthias Fey and Jan~E. Lenssen.
\newblock Fast graph representation learning with {PyTorch Geometric}.
\newblock In \emph{ICLR Workshop on Representation Learning on Graphs and
  Manifolds}, 2019{\natexlab{b}}.

\bibitem[Li et~al.(2020{\natexlab{b}})Li, Xiong, Thabet, and
  Ghanem]{li2020deepergcn}
Guohao Li, Chenxin Xiong, Ali Thabet, and Bernard Ghanem.
\newblock Deepergcn: All you need to train deeper gcns.
\newblock \emph{arXiv preprint arXiv:2006.07739}, 2020{\natexlab{b}}.

\bibitem[Corso et~al.(2020)Corso, Cavalleri, Beaini, Li{\`o}, and
  Veli{\v{c}}kovi{\'c}]{corso2020principal}
Gabriele Corso, Luca Cavalleri, Dominique Beaini, Pietro Li{\`o}, and Petar
  Veli{\v{c}}kovi{\'c}.
\newblock Principal neighbourhood aggregation for graph nets.
\newblock \emph{arXiv preprint arXiv:2004.05718}, 2020.

\bibitem[Brossard et~al.(2020)Brossard, Frigo, and Dehaene]{brossard2020graph}
R{\'e}my Brossard, Oriel Frigo, and David Dehaene.
\newblock Graph convolutions that can finally model local structure.
\newblock \emph{arXiv preprint arXiv:2011.15069}, 2020.

\bibitem[Le et~al.(2021)Le, Bertolini, No{\'e}, and
  Clevert]{le2021parameterized}
Tuan Le, Marco Bertolini, Frank No{\'e}, and Djork-Arn{\'e} Clevert.
\newblock Parameterized hypercomplex graph neural networks for graph
  classification.
\newblock \emph{arXiv preprint arXiv:2103.16584}, 2021.

\bibitem[Ishiguro et~al.(2019)Ishiguro, Maeda, and Koyama]{ishiguro2019graph}
Katsuhiko Ishiguro, Shin-ichi Maeda, and Masanori Koyama.
\newblock Graph warp module: an auxiliary module for boosting the power of
  graph neural networks.
\newblock \emph{arXiv preprint arXiv:1902.01020}, 2019.

\bibitem[Gao and Ji(2019)]{gao2019graph}
Hongyang Gao and Shuiwang Ji.
\newblock Graph u-nets.
\newblock \emph{arXiv preprint arXiv:1905.05178}, 2019.

\bibitem[Klein and Randi{\'c}(1993)]{klein1993resistance}
Douglas~J Klein and Milan Randi{\'c}.
\newblock Resistance distance.
\newblock \emph{Journal of Mathematical Chemistry}, 12\penalty0 (1):\penalty0
  81--95, 1993.

\end{thebibliography}
\bibliographystyle{unsrtnat}

\appendix

\section{Proof of Theorem~\ref{thm:power}}\label{proof:power}
The proof is inspired by the previous theoretical characterization on the power of distance features~\cite{li2020distance}. Basically, performing height-$k$ subgraph extraction around a center node is essentially equivalent to injecting distance features that indicate whether the distance between a node and the center node is less than $k+1$. In the following part, we will explicitly show how these distance features make NGNN more powerful than the 1-WL test. Let us first introduce the outline of the proof. Consider two $n$-node $r$-regular graphs $G^{(1)}=(V^{(1)},E^{(1)})$ and $G^{(2)}=(V^{(2)},E^{(2)})$ and we pick two nodes, each from one graph, denoted by $v_1$ and $v_2$. By performing certain-height (at most $\lceil (\frac{1}{2} + \epsilon)\frac{\log n}{\log(r-1)}\rceil$-height) rooted subgraph extraction around these two nodes, due to the implicit distance features, we may prove that the nodes on the boundary of the obtained two subgraphs will obtain special node representations. These special node representations will be propagated within the subgraphs. After some steps of propagation, we can prove that NGNN by leveraging the subgraph pooling (Eq. \ref{eq:sg-pool}) can distinguish these two subgraphs. This tells that NGNN may generate different node representations for $v_1$ and $v_2$ respectively. Then, a union bound can be used to transform such difference in node representations into the difference in the representations of $G^{(1)}$ and $G^{(2)}$. Note that the proof will assume that there are no node/edge attributes that can be leveraged. Additional node/edge attributes may only improve the possibility to distinguish these two graphs.

The first lemma is to analyze the difference between the structures of the rooted subgraphs around two nodes over two $n$-node $r$-regular graphs. Before introducing that, we need to define a notion termed edge configuration. For a node $v$ in graph $G$, let $Q_{v,G}^k$ denote the set of nodes in $G$ that are exactly $k$-hop neighbors of $v$, i.e., the shortest path distance between $v$ and any node $u\in Q_{v,G}^k$  is $k$. Then, we know the height-$k$ rooted subgraph over $G$ around the center node $v$ is the subgraph induced by the node set $\cup_{i=0}^k Q_{v,G}^i$. 
\begin{definition}
The edge configuration between $Q_{v,G}^k$ and $Q_{v,G}^{k+1}$ is a list $C_{v,G}^k = (a_{v,G}^{1,k}, a_{v,G}^{2,k},...)$ where $a_{v,G}^{i,k}$ denotes the number of nodes in $Q_{v,G}^{k+1}$ of which each has exactly $i$ edges from $Q_{v,G}^k$.
\end{definition}
When we say two edge configurations $C_{v_1,G^{(1)}}^k$ (between $Q_{v_1,G^{(1)}}^k$ and $Q_{v_1,G^{(1)}}^{k+1}$), $C_{v_2,G^{(2)}}^k$ (between $Q_{v_2,G^{(2)}}^k$ and $Q_{v_2,G^{(2)}}^{k+1}$) are equal, we mean that these two lists are component-wise equal to each other. Obviously, we should also have $|Q_{v_1,G^{(1)}}^{k+1}| = |Q_{v_2,G^{(2)}}^{k+1}|$ if $C_{v_1,G^{(1)}}^k=C_{v_2,G^{(2)}}^k$. Now, we are ready to propose the first lemma.
\begin{lemma}\label{lemma:1}
For two graphs $G^{(1)}=(V^{(1)},E^{(1)})$ and $G^{(2)}=(V^{(2)},E^{(2)})$ that are uniformly independently sampled from all $n$-node $r$-regular graphs, where $3\leq r < \sqrt{2\log n}$, we pick any two nodes, each from one graph, denoted by $v_1$ and $v_2$ respectively. Then, there is at least one $i\in (\frac{1}{2}\frac{\log n}{\log(r-1-\epsilon)}, (\frac{1}{2} + \epsilon)\frac{\log n}{\log(r-1-\epsilon)})$ with probability $1-o(n^{-1})$ such that  $C_{v_1,G^{(1)}}^i\neq C_{v_2,G^{(2)}}^i$. Moreover, with at least the same probability, for all $i\in (\frac{1}{2}\frac{\log n}{\log(r-1-\epsilon)}, (\frac{2}{3} - \epsilon)\frac{\log n}{\log(r-1)})$, the number of edges between $Q_{v_j,G^{(j)}}^i$ and  $Q_{v_j,G^{(j)}}^{i+1}$  are at least $(r-1 -\epsilon) |Q_{v_j,G^{(j)}}^i|$ for $j\in \{1,2\}$. 
\end{lemma}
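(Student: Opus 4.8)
The plan is to analyze the two graphs through the configuration (pairing) model for random $r$-regular graphs, in which the $rn$ half-edges are matched uniformly at random and one conditions on the outcome being simple. First I would recall that, since $3\le r<\sqrt{2\log n}$, the probability that the pairing model yields a simple graph is $\exp(-\Theta(r^2))\ge n^{-1/2+o(1)}$; hence any event that fails with probability $o(n^{-3/2})$ in the pairing model fails with probability $o(n^{-1})$ in the uniform model. This reduction is precisely where the hypothesis $r<\sqrt{2\log n}$ is used, and it forces all subsequent tail estimates to be pushed below $n^{-3/2}$ rather than merely $o(1)$. From here on everything is argued in the unconditioned pairing model and transferred at the end.

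Next I would run a breadth-first exploration from each root $v_j$, exposing the pairings of the frontier half-edges one sphere at a time, and track $|Q_{v_j,G^{(j)}}^i|$ together with the number of half-edges leaving $Q^i$ that land on previously unseen vertices (forward edges) versus on already-exposed vertices (collisions, i.e.\ cycle edges). As long as the revealed ball has $m=o(n)$ vertices, each exposed half-edge closes a cycle with probability $O(m/n)$, so the expected number of collisions while growing to radius $i$ is $O(m^2/n)$. For $i\le(\tfrac23-\epsilon)\frac{\log n}{\log(r-1)}$ the ball has size at most $\approx n^{2/3}$, the per-layer collision fraction is $o(1)$, and since each $Q^i$-vertex has $r-1$ outward half-edges, a Chernoff/Azuma bound on the exploration martingale shows that at least $(r-1-\epsilon)|Q^i|$ edges cross to $Q^{i+1}$, with failure probability exponentially small in a growing quantity and hence $o(n^{-3/2})$; a union bound over the $O(n)$ roots and the two graphs gives Claim~2. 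The same estimates show that below $\tfrac12\frac{\log n}{\log(r-1-\epsilon)}$ the ball is a tree with high probability, which is exactly why the configurations are deterministic and indistinguishable before the window opens.

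For Claim~1 I would focus on the window $i\in(\tfrac12\frac{\log n}{\log(r-1-\epsilon)},(\tfrac12+\epsilon)\frac{\log n}{\log(r-1-\epsilon)})$, where the ball grows from $\approx n^{1/2}$ to $\approx n^{1/2+\epsilon}$ and the accumulated number of cycle edges grows from $\Theta(1)$ to $\Theta(n^{2\epsilon})$, so the edge configurations $C^i$ cease to be deterministic and begin to carry genuine entropy. The goal is to show that two \emph{independent} explorations agree on $C^i$ for \emph{every} $i$ in the window only with probability $o(n^{-1})$ (indeed $o(n^{-3/2})$ before transfer). I would establish this by anti-concentration: condition on the exploration up to the start of the window, then show that the trajectory of configurations across the window --- equivalently the layer-by-layer profile of how many frontier vertices receive $1,2,3,\dots$ incoming edges from the previous sphere --- is spread over a set large enough that the coincidence probability of two independent copies is $o(n^{-3/2})$.

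The hard part will be exactly this last anti-concentration estimate. A single radius does not suffice: the number of cycle edges at the top of the window has standard deviation only $\approx n^{\epsilon}$, giving a one-radius matching probability of about $n^{-\epsilon}$, far short of $n^{-3/2}$. I would therefore have to exploit the \emph{full} trajectory of edge configurations, arguing that agreement throughout the window requires the two weakly dependent sequences of per-layer collision profiles to coincide at each step, so that the per-step improbabilities compound into the required bound. Making this rigorous --- quantifying the conditional spread of each layer's profile given the past, and ruling out correlations that could concentrate the trajectory --- is the technical core of the lemma, and it is the part for which the distance-feature analysis of \cite{li2020distance} supplies the key machinery.
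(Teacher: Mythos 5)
Your proposal takes essentially the same route as the paper: the paper's entire proof of Lemma~\ref{lemma:1} is the single sentence that it ``can be obtained by following the steps 1--3 in the proof of Theorem 3.3 in \cite{li2020distance}'', and your configuration-model argument --- the transfer via the simplicity probability $e^{-\Theta(r^2)} \ge n^{-1/2+o(1)}$ (which is indeed exactly where $r<\sqrt{2\log n}$ enters), the per-layer collision counting with Chernoff/Azuma bounds for the edge-count claim, and the trajectory anti-concentration of the sphere-to-sphere configuration profiles for the first claim --- is a faithful reconstruction of those steps. The anti-concentration core that you explicitly leave to the machinery of \cite{li2020distance} is precisely what the paper also defers wholesale to that reference, so your sketch is if anything more detailed than the paper's own proof (one minor quibble: the lemma only concerns the two fixed roots $v_1,v_2$, and your union bound over all $O(n)$ roots would not survive the pairing-to-uniform transfer at the stated $1-o(n^{-1})$ rate, though it is also unnecessary).
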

\begin{proof}
This lemma can be obtained by following the steps 1-3 in the proof of Theorem 3.3 in \cite{li2020distance}.
\end{proof}

Now, we set $K = \lceil (\frac{1}{2} + \epsilon)\frac{\log n}{\log(r-1-\epsilon)}\rceil$. We focus on the two extracted subgraphs $G_{v_1}^{K}$ and $G_{v_2}^K$. We first prove a lemma that shows with a certain number of layers, a proper NGNN will generate different representations for $G_{v_1}^{K}$ and $G_{v_2}^K$, i.e., $\vh_{v_1}$ and $\vh_{v_2}$ in Eq. \ref{eq:sg-pool}.

\begin{lemma}\label{lemma:2}
For two graphs $G^{(1)}=(V^{(1)},E^{(1)})$ and $G^{(2)}=(V^{(2)},E^{(2)})$ that are uniformly independently sampled from all $n$-node $r$-regular graphs, where $3\leq r < \sqrt{2\log n}$, we pick any two nodes, each from one graph, denoted by $v_1$ and $v_2$ respectively, and do $ \lceil (\frac{1}{2} + \epsilon)\frac{\log n}{\log(r-1-\epsilon)}\rceil$-height rooted subgraph extraction around $v_1$ and $v_2$. With at most $\epsilon\lceil\frac{\log n}{\log(r-1-\epsilon)}\rceil$ many layers, a proper message passing GNN (with injective $U_t, M_t$ and subgraph pooling) will generate different representations for the extracted two subgraphs with probability at least $1-o(n^{-1})$.
\end{lemma}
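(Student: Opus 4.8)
The plan is to invoke Lemma~\ref{lemma:1} to pin down a \emph{single} layer at which the two rooted subgraphs differ combinatorially, and then argue that an injective message passing base GNN, run for just enough rounds, turns that combinatorial difference into a difference of the pooled subgraph representations $\vh_{v_1}$ and $\vh_{v_2}$ in Eq.~\ref{eq:sg-pool}. Write $L=\tfrac{\log n}{\log(r-1-\epsilon)}$ for brevity. Lemma~\ref{lemma:1} supplies, with probability $1-o(n^{-1})$, a witness index $i^\ast\in(\tfrac12 L,(\tfrac12+\epsilon)L)$ with $C_{v_1,G^{(1)}}^{i^\ast}\neq C_{v_2,G^{(2)}}^{i^\ast}$, together with the expansion guarantee that every layer $Q^i$ in the relevant band branches outward by a factor at least $r-1-\epsilon$. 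The key observation is that height-$K$ extraction, $K=\ceil{(\tfrac12+\epsilon)L}$, acts as an \emph{implicit distance feature}: each node at distance $<K$ keeps all $r$ of its neighbors and hence has subgraph-degree exactly $r$, whereas every boundary node in $Q^K$ loses its edges to $Q^{K+1}$ and so has strictly smaller subgraph-degree.

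First I would use that, since we work in the feature-free setting permitted by the statement, the only nodes distinguishable after one injective round are precisely the reduced-degree boundary nodes, which therefore acquire a \emph{special} tag separating them from the uniform interior representation. Next I would propagate this tag inward: with injective $M_t,U_t$ (Eq.~\ref{eq:mp-1}) the round-$t$ representation of a node faithfully encodes the multiset of its neighbors' round-$(t-1)$ representations, so after $t$ rounds a node's representation records, for each $s\le t$, the presence and distance of boundary nodes within $s$ hops. Using the expansion clause of Lemma~\ref{lemma:1} to ensure that within the band the shortest path from a layer-$i$ node to the boundary is realized by moving outward, this effectively equips every node within $K-i^\ast$ hops of the boundary with its \emph{distance-to-boundary}, equivalently its layer index.

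With layer indices recovered I would convert the edge-configuration difference into a difference of node-representation multisets. A node $u\in Q^{i^\ast+1}$ can read off, from its round-$\ell$ representation, how many of its neighbors carry layer index $i^\ast$ (distance-to-boundary one larger than $u$'s own); this count is exactly the number of edges from $u$ to $Q^{i^\ast}$, which is the quantity tabulated by $C^{i^\ast}$. Hence the multiset $\{\vh^{\ell}_{u,G_{v_j}^{K}}\mid u\in Q_{v_j,G^{(j)}}^{i^\ast+1}\}$ encodes $C_{v_j,G^{(j)}}^{i^\ast}$, and because these representations are tagged by distance-to-boundary they occupy a separate part of the global multiset and cannot be confused with contributions from other layers. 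Therefore $C_{v_1,G^{(1)}}^{i^\ast}\neq C_{v_2,G^{(2)}}^{i^\ast}$ forces the two subgraphs' overall node-representation multisets to differ, and injective subgraph pooling (Eq.~\ref{eq:sg-pool}) yields $\vh_{v_1}\neq\vh_{v_2}$. The required number of rounds is the propagation distance $\ell=K-i^\ast$, which by the interval for $i^\ast$ and the value of $K$ stays within the stated budget $\epsilon\ceil{L}$.

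The hard part will be the middle step: showing that injective message passing lets each node recover its distance-to-boundary (hence its layer) cleanly enough that the edge-configuration count is readable from a single round-$\ell$ representation. This forces both conclusions of Lemma~\ref{lemma:1} to be used in tandem --- the branching bound makes the subgraph locally tree-like in the band so that shortest paths to the boundary go monotonically outward and the layer index is unambiguous, while the witness index keeps $\ell=K-i^\ast$ within the permitted $\epsilon\ceil{L}$ rounds. One must also rule out accidental cancellation, i.e.\ that a difference localized at layer $i^\ast$ is not masked by coincidental agreement elsewhere; this is exactly what the distance-to-boundary tagging prevents, since it keeps each layer's contribution in a distinct coordinate of the representation multiset.
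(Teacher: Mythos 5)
Your proposal follows essentially the same route as the paper's proof: invoke Lemma~\ref{lemma:1} for a witness layer with differing edge configuration plus the expansion bound, note that subgraph extraction drops the degrees of (almost all) boundary nodes in $Q_{v_j,G^{(j)}}^K$ so that one injective round tags them, propagate these tags inward one layer per round (your ``distance-to-boundary'' labels are exactly the paper's bookkeeping of \emph{default} versus \emph{new} representations), read the configuration difference off the representation multiset near layer $i^\ast$, and finish with injective subgraph pooling, with the round budget $K-i^\ast\lesssim\epsilon\frac{\log n}{\log(r-1-\epsilon)}$ matching the paper's $K-k$ count. The only (cosmetic) difference is that you read $C^{i^\ast}$ from the $Q^{i^\ast+1}$ side, which matches the definition of edge configuration directly, while the paper phrases the same step as the propagation from $Q^{k+1}$ inducing different representation sets in $Q^{k}$; both rely on the same ``almost all''/locally-tree-like caveats.
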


\begin{proof}
According to Lemma~\ref{lemma:1}, we know that with probability $1-o(n^{-1})$, there exists at least one $i\in (\frac{1}{2}\frac{\log n}{\log(r-1-\epsilon)}, (\frac{1}{2} + \epsilon)\frac{\log n}{\log(r-1-\epsilon)})$ such that $C_{v_1,G^{(1)}}^k\neq C_{v_2,G^{(2)}}^k$. So there exists at least one $k\leq K$ that make $C_{v_1,G^{(1)}}^k\neq C_{v_2,G^{(2)}}^k$ (thus the difference in edge configurations appears in $G_{v_1}^{K}$ and $G_{v_2}^K$) and we pick the largest $k$.  

Now let us consider running a message passing GNN over the two subgraphs $G_{v_j}^{K}$, $j\in\{1,2\}$. All nodes are initialized with the same node features. The nodes of these two subgraphs can be categorized into $Q_{v_j,G^{(j)}}^i$ ($0\leq i \leq K$), for $j\in\{1,2\}$ respectively. Next, let us consider the node representations in these categories during the message passing procedure. We have the following observations.
\begin{enumerate}
    \item Note that all the nodes other than those in $Q_{v_j,G^{(j)}}^K$ have degree $r$ in both subgraphs. Therefore, in the $t$-th iteration, the nodes in $\cup_{i=0}^{K-t}Q_{v_j,G^{(j)}}^{i}$ for $j\in\{1,2\}$ will share the same node representation. We call this node representation as \emph{default representation}. Note that if we do not perform rooted subgraph extraction, then all nodes in all $r$-regular graph hold default representation.
    
    \item Node representations that are different from default representations will first appear among the nodes in $Q_{v_j,G^{(j)}}^K$ after the first iteration. This is because there are at least $(r-1 - \epsilon) |Q_{v_j,G^{(j)}}^K|$ edges between $Q_{v_j,G^{(j)}}^K$ and $Q_{v_j,G^{(j)}}^{K+1}$ before performing subgraph extraction (due to Lemma~\ref{lemma:1}) and all these edges will not appear in the extracted subgraphs. Then, almost all nodes in $Q_{v_j,G^{(j)}}^K$ hold only degree one (and thus do not have degree $r$ to keep default representations) within the corresponding extracted subgraphs. We uniformly call the node representations that are different from the default ones as \emph{new representations}. New representations may be mutually different. 
    
    \item Those new different node representations will propagate to nodes in $Q_{v_j,G^{(j)}}^{K-1}$, $Q_{v_j,G^{(j)}}^{K-2}$ and so on and so forth via iterative message passing. Moreover, during such propagation procedure, after $t\geq 2$ iterations, new representations will at least make almost all nodes in $Q_{v_j,G^{(j)}}^{i}$ hold representations different form almost all nodes in $Q_{v_j,G^{(j)}}^{i+1}$ for $i=K-t+1, K-t+2,..., K-1$, which can be easily obtained by doing induction from $t=t_1$ to $t=t_1+1$. 
\end{enumerate}
Observing the above three points, We may compare the above propagating procedure between $G_{v_1}^{K}$ and $G_{v_2}^{K}$. Suppose in the first $K-k$ steps of message passing, the set of node representations (both the default ones and the new ones) can keep the same between the two extracted subgraphs. If this is not true, we have already proven the results. As they hold different edge configurations in $C_{v_1,G^{(1)}}^k\neq C_{v_2,G^{(2)}}^k$, when the new node representations propagate from $Q_{v_j,G^{(j)}}^{k+1}$ to $Q_{v_j,G^{(j)}}^{k}$, it will definitely induce different sets of new node representations between $Q_{v_1,G^{(1)}}^{k}$ and $Q_{v_2,G^{(2)}}^{k}$. Currently,  node representations are kept the same between  $Q_{v_1,G^{(1)}}^{i}$ and $Q_{v_2,G^{(2)}}^{i}$ for $i\neq [0,k-1]$ as they are all default node representations. Though $\cup_{i=k+1}^K Q_{v_j,G^{(j)}}^{i}$  also hold new node representations, they are different from those in  $Q_{v_j,G^{(j)}}^{k}$  for $j\in\{1,2\}$. At this point, if an injective subgraph pooling operation is adopted, then the obtained representations of $G_{v_1}^{K}$ and $G_{v_2}^{K}$, i.e., $\vh_{v_1}$ and $\vh_{v_2}$, are different. 
\end{proof}

Based on Lemma~\ref{lemma:2}, using a union bound by comparing a node representation of $G^{(1)}$ with all node representaitons of $G^{(2)}$, we may achieve the final conclusion. Specifically, we consider a node of $G^{(1)}$, say $v_1$, and another arbitrary node of $G^{(2)}$, say $v_2$. Using Lemma~\ref{lemma:2}, we know with probability $1-o(n^{-1})$, $\vh_{v_1}$ is different from $\vh_{v_2}$. Then, using the union bound, with probability $1-o(1)$, we have $\vh_{v_1} \notin \{\vh_{v_2}| v_2 \in V(G^{(2)})\}$. Therefore, if the final graph pooling (Eq. \ref{eq:g-pool}) is injective, we may guarantee that NGNN can generate different representations for $G^{(1)}$ and $G^{(2)}$.

\section{Design choices of NGNN}\label{appendix:design_choices}

In this section, we discuss some other design choices of NGNN.

\textbf{High-order NGNN.} 
NGNN is a two-level GNN (a GNN of GNNs), where a base GNN is used to learn a final node representation from a rooted subgraph and an outer GNN (graph pooling) is used to learn a graph representation from the base GNNs' outputs. This design thus involves one level of nesting, which we call first-order NGNN. To extend the framework, we propose \textit{high-order NGNN}, where we make the base GNN itself an NGNN. That is, we perform the subgraph representation learning tasks each using a first-order NGNN, where we treat each subgraph the same as the graph in the original NGNN. This way, we arrive at a second-order NGNN with two levels of nesting (a GNN of NGNNs, or a GNN of GNNs of GNNs). Repeating this construction, we can in principle construct an arbitrary-order NGNN. It is interesting to investigate whether high-order NGNNs can further enhance the representation power and the practical performance of a base GNN. We leave the exploration of such architectures to future work.

\textbf{Pooling functions $R_0$ and $R_1$.} 
To summarize node representations into a subgraph/graph representation, we need a readout (pooling) function. Popular choices include sum, mean, max, as well as more complex ones such as selecting top-$K$ nodes~\citep{zhang2018end,gao2019graph} and hierarchical approaches~\citep{ying2018hierarchical}. In this paper, we find mean pooling works very well, which directly takes the mean of node representations as the subgraph/graph representation. We also find another pooling function to be sometimes useful for subgraph pooling, called center pooling (CP). CP directly uses the root node's representation to represent the entire subgraph. The success of CP relies on using more layers of message passing than the height of the rooted subgraph, so that even the intermediate representation of the center root node alone can have sufficient information about the entire subgraph. This is feasible for rooted subgraphs with a small height. Note that when using a number of message passing layers smaller than the subgraph height, NGNN with CP will reduce to a standard message passing GNN.

\textbf{Subgraph height $h$ and base GNN layers $l$.}
NGNN is flexible in terms of choosing the subgraph height $h$ and the number of message passing layers $l$ in the base GNN. Theorem~\ref{thm:power} provides a guide for choosing $h$ and $l$ when discriminating $r$-regular graphs. In practice, we find using $h=3$ and $l=4$ generally performs well across various tasks. Using a small $h$ will restrict the receptive field, causing NGNN to learn too local features. Using a too large $h$ might cause each rooted subgraph to include the entire graph. For the number of message passing layers $l$, we find that using $l\geq h$ performs better. This can be explained by that using a large $l$ makes each node in a rooted subgraph to more sufficiently absorb the whole-subgraph information thus learning a better intermediate node representation reflecting its structural position within the subgraph. Please refer to \citep{zeng2020deep} for more motivations for using deeper message passing layers than the subgraph height.

\section{More details about the experimental settings}\label{appendix:hyper}
The experiments were run on a Linux server with 64GB memory, two NVIDIA RTX 2080S (8GB) GPUs and an INTEL i9-9900 8-core CPU. For \texttt{ogbg-molhiv}, the final NGNN architecture used a rooted subgraph height $h=4$ and number of GIN layers $l=6$. Mean pooling is used in both the subgraph and graph pooling. The final NGNN architecture for \texttt{ogbg-molpcba} used a rooted subgraph height $h=3$ and the number of GIN layers $l=4$. Center pooling (CP) is used in the subgraph pooling and mean pooling is used in the graph pooling. Although we searched $h$ and $l$, we found the final performance is not very sensitive to these hyperparameters as long as $h$ is between 3 and 5 and $l>h$. For the DE features, we use shortest path distance and resistance distance~\citep{klein1993resistance}.

\section{Simulation experiments to verify Theorem~\ref{thm:power}}\label{appendix:simu}
We conduct a simulation over random regular graphs to validate Lemma~\ref{lemma:2} (how well NGNN distinguishes nodes of regular graphs) and Theorem~\ref{thm:power} (how well NGNN distinguishes regular graphs). The results are shown in Figure~\ref{fig:simulation}, which match our theory almost perfectly. Basically, we sample 100 $n$-node 3-regular graphs uniformly at random, and then apply an untrained NGNN to these graphs to see how often NGNN can distinguish the nodes and graphs at different rooted subgraph height $h$ and node number $n$. The required $h$ at different $n$ matches almost perfectly with the lower bound in Lemma~\ref{lemma:2}. More details are contained in the caption of Figure~\ref{fig:simulation}.

\begin{figure*}[h]
\centering
\includegraphics[width=0.45\textwidth]{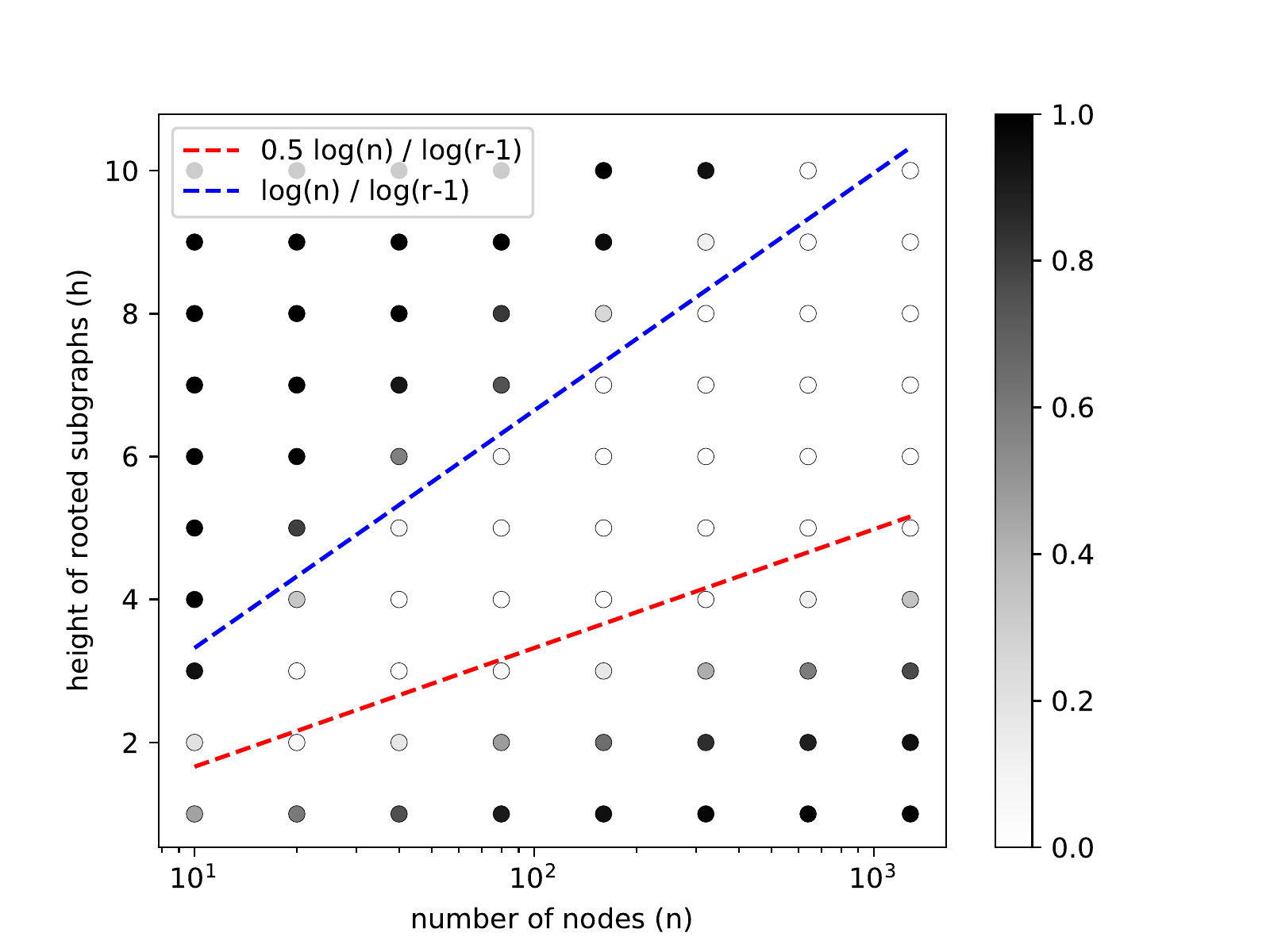}
\includegraphics[width=0.45\textwidth]{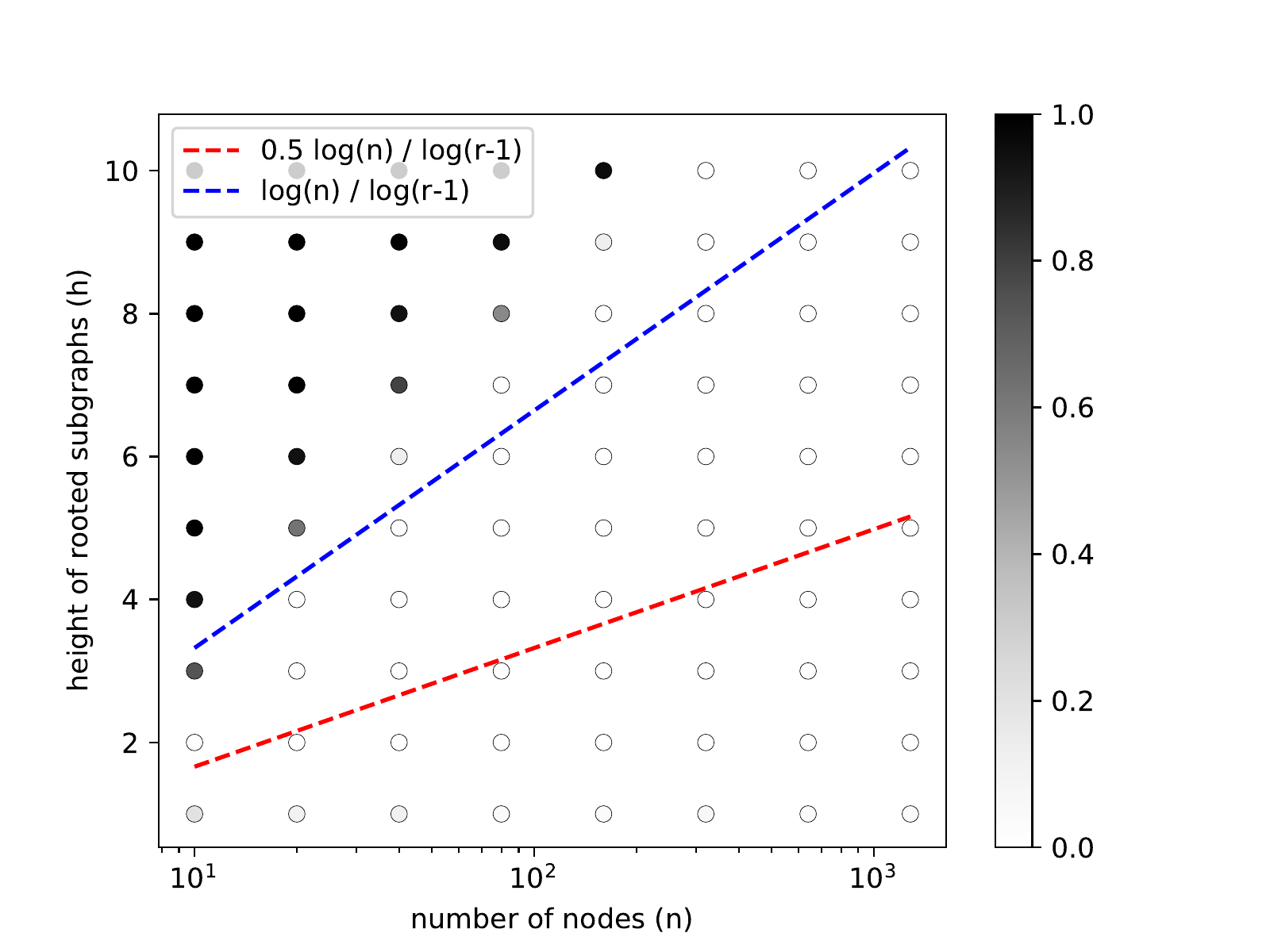}
\caption{Simulation to verify Theorem~\ref{thm:power}. The left graph shows the node-level (with only subgraph pooling) simulation results. The right graph shows the graph-level (with both subgraph and graph pooling) simulation results. We uniformly sample 100 $n$-node 3-regular graphs with $n$ ranging from 10 to 1280. We let the rooted subgraph height $h$ range from 1 to 10. We apply an untrained Nested GIN with one message passing layer to these graphs (with a uniform 1 as node features). In the left figure, we compare the final node representations (after subgraph pooling) from all graphs output by the Nested GIN. If the difference between two node representations $||\vh_u - \vh_v||_2$ is greater than machine accuracy, they are regarded as indistinguishable. The shade of each scatter point's color reflects the portion of indistinguishable node pairs at certain $(n,h)$. The darker, the more indistinguishable node pairs. In the right graph, we compare the final graph representations (after graph pooling) output by the Nested GIN. The blue and red dashed lines show the theoretical upper and lower bounds for $h$ to discriminate almost all nodes in $n$-node 3-regular graphs, respectively. As we can see, the node-level simulation results perfectly match the theory (Lemma~\ref{lemma:2})---when $h$ is larger than $0.5 \log(n)/\log(r-1)$, almost all nodes from $r$-regular graphs are distinguishable by NGNN. 
When $h$ is even larger than $\log(n)/\log(r-1)$, the nodes can hardly be distinguished because each subgraph contains the entire regular graph. The graph-level simulation results show that even using a very small $h$ NGNN can still discriminate almost all $r$-regular graphs---$h$ in practice even does not need to be always chosen beyond $0.5 \log(n)/\log(r-1)$. This is because although most nodes from two $r$-regular graphs cannot be distinguished when $h \leq 0.5 \log(n)/\log(r-1)$, the graph pooling can still distinguish the two graphs as long as there exists one single node from one graph holding a representation different from any node representation from the other graph.}
\label{fig:simulation}
\end{figure*}

\section{Ablation study on DE}\label{appendix:de}

In this paper, we choose Distance Encoding (DE)~\citep{li2020distance} to augment the initial node features of NGNN, due to its good theoretical properties for improving the expressive power of message passing GNNs as well as its superb empirical performance on link prediction tasks~\citep{zhang2018link,Zhang2020Inductive}. DE encodes the distance between a node and the root node into a vector through an embedding layer. The distance embedding is concatenated with the raw features of a node as its new features (in this rooted subgraph) input to the base GNN. Note that when this node appears in another rooted subgraph, it may have a different distance to that root node, thus resulting in different DE features in different subgraphs. Only the NGNN framework can leverage such a subgraph-specific feature augmentation---a standard GNN treats a node always the same no matter which node's rooted subgraph/subtree it is in. 




\captionsetup[table]{font=small}
\begin{table*}[h!]
\caption{\small Ablation study on QM9 comparing Nested GNNs with and without DE features.}
\vspace{-10pt}
\label{table:ablation1}
\begin{center}
  \resizebox{1\textwidth}{!}{
  \begin{tabular}{lcccccccccccc}
    \toprule
    \textbf{Method}&$\mu$&$\alpha$&$\varepsilon_{\text{HOMO}}$&$\varepsilon_{\text{LUMO}}$&$\Delta \varepsilon$&$\langle R^2 \rangle$&ZPVE&$U_0$&$U$&$H$&$G$&$C_v$\\
    \midrule
    \textbf{1-GNN} & 0.493 &  0.78 & 0.00321 & 0.00355 & 0.0049 & 34.1 & 0.00124 & 2.32 & 2.08 & 2.23 & 1.94 & 0.27\\
    \textbf{Nested 1-GNN} (no DE) & \cellcolor[HTML]{E7FFE2}0.466 &  \cellcolor[HTML]{E7FFE2}0.38 & \cellcolor[HTML]{E7FFE2}0.00292 & \cellcolor[HTML]{E7FFE2}\textbf{0.00294} & \cellcolor[HTML]{E7FFE2}0.0042 & \cellcolor[HTML]{E7FFE2}24.0 & \cellcolor[HTML]{E7FFE2}0.00040 & \cellcolor[HTML]{E7FFE2}1.09 & \cellcolor[HTML]{E7FFE2}1.76 & \cellcolor[HTML]{E7FFE2}1.04 & \cellcolor[HTML]{E7FFE2}1.19 & \cellcolor[HTML]{E7FFE2}\textbf{0.111}\\
    \textbf{Nested 1-GNN} (with DE) & \cellcolor[HTML]{C3FFB5}\textbf{0.428} &  \cellcolor[HTML]{C3FFB5}\textbf{0.29} & \cellcolor[HTML]{C3FFB5}\textbf{0.00265} & 0.00297 & \cellcolor[HTML]{C3FFB5}\textbf{0.0038} & \cellcolor[HTML]{C3FFB5}\textbf{20.5} & \cellcolor[HTML]{C3FFB5}\textbf{0.00020} & \cellcolor[HTML]{C3FFB5}\textbf{0.295} & \cellcolor[HTML]{C3FFB5}\textbf{0.361} & \cellcolor[HTML]{C3FFB5}\textbf{0.305} & \cellcolor[HTML]{C3FFB5}\textbf{0.489} & 0.174\\
    \midrule
    \textbf{1-2-GNN} & 0.493 &  \textbf{0.27} & 0.00331 & 0.00350 & 0.0047 & 21.5 & 0.00018 & \textbf{0.0357} & \textbf{0.107} & \textbf{0.070} & \textbf{0.140} & 0.0989\\
    \textbf{Nested 1-2-GNN} (no DE) & \cellcolor[HTML]{E7FFE2}0.454 &  0.308 & \cellcolor[HTML]{E7FFE2}0.00280 & \cellcolor[HTML]{E7FFE2}0.00278 & \cellcolor[HTML]{E7FFE2}0.0041 & 23.3 & 0.00029 & 0.349 & 0.281 & 0.395 & 0.307 & \cellcolor[HTML]{E7FFE2}0.0945\\
    \textbf{Nested 1-2-GNN} (with DE) & \cellcolor[HTML]{C3FFB5}\textbf{0.437} &  0.278 & \cellcolor[HTML]{C3FFB5}\textbf{0.00275} & \cellcolor[HTML]{C3FFB5}\textbf{0.00271} & \cellcolor[HTML]{C3FFB5}\textbf{0.0039} & \textbf{20.4} & \textbf{0.00017} & 0.252 & 0.265 & 0.241 & 0.272 & \cellcolor[HTML]{C3FFB5}\textbf{0.0891}\\
    \midrule
    \textbf{1-3-GNN} & 0.473 &  0.46 & 0.00328 & 0.00354 & 0.0046 & 25.8 & 0.00064 & 0.6855 & 0.686 & 0.794 & 0.587 & 0.158\\
    \textbf{Nested 1-3-GNN} (no DE) & \cellcolor[HTML]{E7FFE2}0.448 &  \cellcolor[HTML]{E7FFE2}0.298 & \cellcolor[HTML]{E7FFE2}0.00276 & \cellcolor[HTML]{E7FFE2}0.00276 & \cellcolor[HTML]{E7FFE2}0.0040 & \cellcolor[HTML]{E7FFE2}22.0 & \cellcolor[HTML]{E7FFE2}0.00025 & \cellcolor[HTML]{E7FFE2}0.410 & \cellcolor[HTML]{E7FFE2}0.396 & \cellcolor[HTML]{E7FFE2}0.370 &\cellcolor[HTML]{E7FFE2} 0.422 & \cellcolor[HTML]{E7FFE2}0.0936\\
    \textbf{Nested 1-3-GNN} (with DE) & \cellcolor[HTML]{C3FFB5}\textbf{0.436} &  \cellcolor[HTML]{C3FFB5}\textbf{0.261} & \cellcolor[HTML]{C3FFB5}\textbf{0.00265} & \cellcolor[HTML]{C3FFB5}\textbf{0.00269} & \cellcolor[HTML]{C3FFB5}\textbf{0.0039} & \cellcolor[HTML]{C3FFB5}\textbf{20.2} & \cellcolor[HTML]{C3FFB5}\textbf{0.00017} & \cellcolor[HTML]{C3FFB5}\textbf{0.291} & \cellcolor[HTML]{C3FFB5}\textbf{0.278} & \cellcolor[HTML]{C3FFB5} \textbf{0.267} & \cellcolor[HTML]{C3FFB5}\textbf{0.287} & \cellcolor[HTML]{C3FFB5}\textbf{0.0879}\\
    \midrule
    \textbf{1-2-3-GNN} & 0.476 &  0.27 & 0.00337 & 0.00351 & 0.0048 & 22.9 & 0.00019 & \textbf{0.0427} & \textbf{0.111} & \textbf{0.0419} & \textbf{0.0469} & 0.0944\\
    \textbf{Nested 1-2-3-GNN} (no DE) & \cellcolor[HTML]{E7FFE2}0.449 &  0.306 & \cellcolor[HTML]{E7FFE2}0.00282 & \cellcolor[HTML]{E7FFE2}0.00286 & \cellcolor[HTML]{E7FFE2}0.0041 & \cellcolor[HTML]{E7FFE2}22.0 & 0.00023 & 0.220 & 0.218 & 0.268 & 0.205 & 0.0975\\
    \textbf{Nested 1-2-3-GNN} (with DE) & \cellcolor[HTML]{C3FFB5}\textbf{0.433} &  \textbf{0.265} & \cellcolor[HTML]{C3FFB5}\textbf{0.00279} & \cellcolor[HTML]{C3FFB5}\textbf{0.00276} & \cellcolor[HTML]{C3FFB5}\textbf{0.0039} & \cellcolor[HTML]{C3FFB5}\textbf{20.1} & \textbf{0.00015} & 0.205 & 0.200 & 0.249 & 0.253 & \textbf{0.0811}\\
  \bottomrule
\end{tabular}
}
\end{center}
\vspace{-5pt}
\end{table*}

In this section, we do ablation experiments to study the effect of the DE features. We choose QM9 as the testbed. The base GNNs are the same as in Table~\ref{table:qm9}. For each base GNN, we compare it with its Nested GNN version without DE features (no DE) and its Nested GNN version with DE features (with DE). The results are shown in Table~\ref{table:ablation1}.

In Table~\ref{table:ablation1}, we color the cell with light green if the NGNN (no DE) is better than the base GNN, and mark the cell with green if the NGNN (with DE) is additionally better than the NGNN (no DE). From the results, we can first observe that NGNNs (no DE) generally outperform the base GNNs, validating that even without any feature augmentation the NGNN framework still enhances the performance of base GNNs. Furthermore, we can observe that if NGNN improves over the base GNN, adding DE features could further enlarge the performance improvement by achieving the smallest MAEs among the three (i.e., base GNN, NGNN (no DE) and NGNN (with DE)). This demonstrates the usefulness of augmenting NGNN with DE features. Note that adding such DE features can be done simultaneously with the rooted subgraph extraction process, which only adds a negligible amount of time. Thus, augmenting NGNN with DE features is almost a free yet powerful operation to further enhance NGNN's power, which motivates us to make it a default choice of NGNN.


\end{document}